\documentclass{article}

\usepackage{microtype}
\usepackage{graphicx}
\usepackage{booktabs} 
\usepackage{pifont}
\usepackage{diagbox}  
\usepackage{multicol}
\usepackage{makecell}
\usepackage{multirow}
\usepackage[table,dvipsnames]{xcolor} 
\usepackage{subcaption}
\usepackage{enumitem}
\usepackage[skins,breakable]{tcolorbox}

\usepackage{hyperref}

\usepackage[preprint]{icml2026}

\usepackage{amsmath}
\usepackage{amssymb}
\usepackage{mathtools}
\usepackage{amsthm}

\usepackage[capitalize,noabbrev]{cleveref}

\theoremstyle{plain}
\newtheorem{theorem}{Theorem}[section]
\newtheorem{proposition}[theorem]{Proposition}

\theoremstyle{definition}

\newtheorem{assumption}[theorem]{Assumption}
\theoremstyle{remark}

\usepackage[textsize=tiny]{todonotes}

\icmltitlerunning{Guideline-Grounded Evidence Accumulation for High-Stakes Agent Verification}

\begin{document}

\twocolumn[
\icmltitle{
Guideline-Grounded Evidence Accumulation for High-Stakes Agent Verification}



\icmlsetsymbol{visit}{*}

\begin{icmlauthorlist}
\icmlauthor{Yichi Zhang}{thu,cam,visit}
\icmlauthor{Nabeel Seedat}{cam,rou}
\icmlauthor{Yinpeng Dong}{thu}
\icmlauthor{Peng Cui}{thu}
\icmlauthor{Jun Zhu}{thu}
\icmlauthor{Mihaela van der Schaar}{cam}
\end{icmlauthorlist}

\icmlaffiliation{thu}{Tsinghua University}
\icmlaffiliation{cam}{University of Cambridge}
\icmlaffiliation{rou}{Thomson Reuters Foundational Research}

\icmlcorrespondingauthor{Yichi Zhang}{zyc22@mails.tsinghua.edu.cn}
\icmlcorrespondingauthor{Mihaela van der Schaar}{mv472@cam.ac.uk}

\icmlkeywords{Machine Learning, ICML}

\vskip 0.3in
]



\printAffiliationsAndNotice{\icmlEqualContribution} 


\begin{abstract}

As LLM-powered agents have been used for high-stakes decision-making, such as clinical diagnosis, it becomes critical to develop reliable verification of their decisions to facilitate trustworthy deployment. Yet, existing verifiers usually underperform owing to a lack of domain knowledge and limited calibration. To address this, we establish \textbf{GLEAN}, an agent verification framework with \textbf{G}uide\textbf{L}ine-grounded \textbf{E}vidence \textbf{A}ccumulatio\textbf{N} that compiles expert-curated protocols into trajectory-informed, well-calibrated correctness signals. GLEAN evaluates the step-wise alignment with domain guidelines and aggregates multi-guideline ratings into surrogate features, which are accumulated along the trajectory and calibrated into correctness probabilities using Bayesian logistic regression. Moreover, the estimated uncertainty triggers active verification, which selectively collects additional evidence for uncertain cases via expanding guideline coverage and performing differential checks. We empirically validate GLEAN with agentic clinical diagnosis across three diseases from the MIMIC-IV dataset, surpassing the best baseline by 12\% in AUROC and 50\% in Brier score reduction, which confirms the effectiveness in both discrimination and calibration. In addition, the expert study with clinicians recognizes GLEAN's utility in practice.

\end{abstract}
\section{Introduction}

\label{sec:intro}

\begin{figure}[t]
    \centering
    \vspace{-1ex}
    \hspace*{-0.05\linewidth}
    \includegraphics[width=1.1\linewidth]{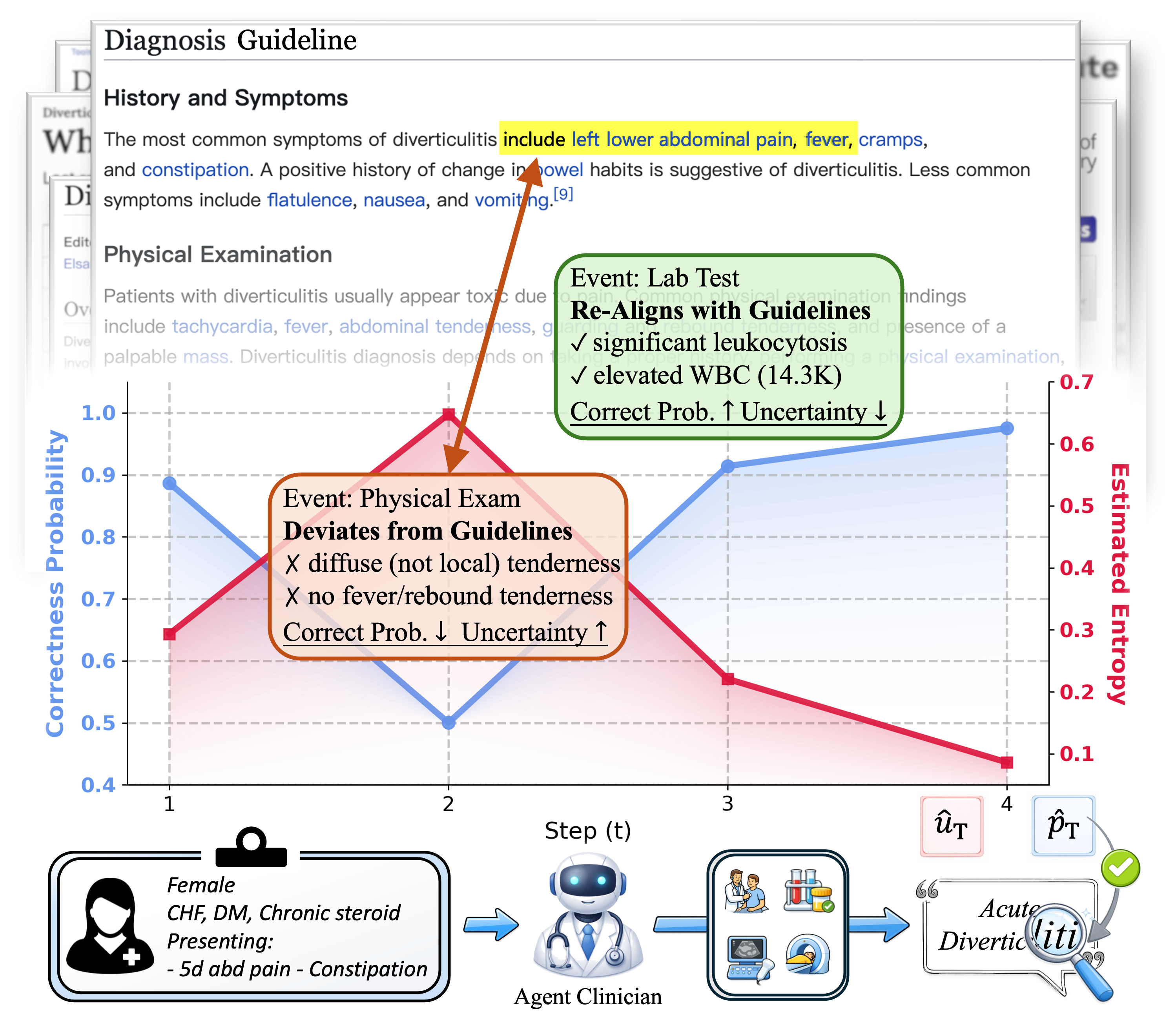}
    \vspace{-3ex}
    \caption{\textbf{Guideline-grounded verification in clinical diagnosis.} For an agent clinician with access to different examinations (bottom), GLEAN verifies its diagnosis by assessing alignment with clinical guidelines (top) at each step. The example on a patient with acute diverticulitis illustrates how GLEAN accumulates guideline-grounded evidence into calibrated correctness probabilities along the execution (middle). While initial history at the first step aligns well with criteria, physical examination contradicts guidelines on abdominal tenderness and fever, dropping confidence to 0.5. Then, laboratory results recover the confidence, and CT imaging at the last step further confirms the diagnosis with higher confidence.}
    \label{fig:placeholder}
    \vspace{-2ex}
\end{figure}

Large Language Models (LLMs) are increasingly serving as autonomous agents~\citep{yao2022react,park2023generative,liuagentbench} in high-stakes domains such as healthcare~\citep{tang2024medagents,fan2024ai}. As they are entrusted in these open-ended, risk-sensitive tasks, erroneous answers could lead to severe consequences in the real world. Therefore, it becomes critical to establish reliable verification that judges the correctness of decisions made by agents~\citep{venktesh2025trust}. In clinical diagnosis, for example, mistakes in agent decisions should be flagged before they are acted upon in patient care~\citep{graber2013incidence,davenport2019potential}. Yet, high-stakes domains usually present a fundamental asymmetry that generation is cheap, while verification is intrinsically harder, as accurate checking often requires domain expertise~\citep{arora2025healthbench}. This raises a central question: \textit{How can we compile domain knowledge into reliable verification signals with calibrated correctness probabilities}, enabling risk control via abstention or escalation?

While verification has been studied for tasks like fact checking~\citep{lin2023generating} and mathematics~\citep{weng2023large}, these approaches are still insufficient for high-stakes agents. Reward modeling is a popular paradigm that trains a verifier to score an agent's outputs~\citep{lambert2025rewardbench,xia2025agentrm} or its execution steps~\citep{lightman2023lets,zheng2025survey,xi2025agentprm}. While it can internalize domain knowledge from large-scale training data, obtaining substantial expert-labeled annotations is often prohibitively expensive and practically challenging.
Meanwhile, training-free alternatives, including model-based ratings like LLM-as-a-Judge~\citep{zheng2023judging} and sampling-based estimates such as self-consistency~\citep{wangself}, are still inadequate. These techniques are weakly informed by explicit knowledge, leading to evaluations that are either biased toward implicit criteria in the model~\citep{gu2024survey} or easily misled by consistent errors~\citep{hobelsberger2025systematic,tan2025too}. Some approaches incorporate external examinations~\citep{pan2023qacheck,jiang2024forward}, but they rarely yield a well-calibrated signal grounded in domain standards.

In this paper, we underscore that much of the necessary domain knowledge in high-stakes scenarios is already codified in professional protocols. They come in diverse forms, such as clinical guidelines, checklists, and operating procedures \citep{grimshaw1993effect,shah2023systematic}, which could help reduce reliance on exhaustive sample-specific annotations and mitigate bias from implicit preferences through domain grounding. As exemplified in~\cref{fig:placeholder}, these structured materials often specify how decisions should be made and what constitutes an acceptable process in principle, naturally providing explicit and auditable standards to verify agent decisions step by step along the execution trajectories, instead of only judging their final outcomes. This motivates us to build calibrated signals upon these domain guidelines for verification, providing trajectory-informed estimates of the probability that the agent’s decision is correct.

To this end, we introduce \textbf{GuideLine-grounded Evidence AccumulatioN (GLEAN)}, a novel verification framework for high-stakes agentic decision-making that instantiates domain guidelines into calibrated verification signals for agent execution. Taking a probabilistic formulation of sequential evidence accumulation along the agent trajectories, GLEAN obtains scores indicating alignment with multiple guidelines for each step and accumulates them as surrogate evidence for trajectory correctness. These guideline-grounded signals are shown to be informative and demonstrate an approximate monotonic linearity with correctness in the logit space, which allows us to acquire well-calibrated correctness probabilities via Bayesian logistic regression in a lightweight manner.
However, the surrogate evidence can still be insufficient when the guidelines are incomplete or less specific. To address this, GLEAN further supports active verification using estimated uncertainty to selectively increase verification effort at inference, analogous to the view of test-time scaling~\citep{zhang2025survey}. Specifically, GLEAN performs guideline expansion for broader coverage and differential checks against competitive alternatives on uncertain cases, to gather additional evidence for better verification.

We validate GLEAN with clinical diagnosis~\citep{hager2024evaluation}, a representative high-stakes scenario that is error-sensitive, open-ended, and governed by explicit guidelines. We summarize our contributions as: \ding{172} \textbf{\textit{Conceptually}}: We reframe high-stakes agent verification as sequential evidence accumulation grounded in domain knowledge, which yields informative process signals that trigger active evidence collection, linking verification to test-time scaling. \ding{173} \textbf{\textit{Technically}}: We operationalize guidelines into per-step alignment scores, transform them via Bayesian logistic regression into calibrated correctness probabilities, and introduce active verification that refines verification signals when uncertainty is high. \ding{174} \textbf{\textit{Empirically}}: We demonstrate the effectiveness of GLEAN across three disease diagnosis tasks, showing that GLEAN significantly outperforms popular verification methods in both discrimination and calibration, achieving AUROC over 0.94 and Brier scores lower than 0.10 with active verification. It also boosts agent diagnosis accuracy from 55.6\% to 77.5\% with Best-of-N. We also conduct an expert study with clinicians to confirm its practical utility, supporting reliable deployment in high-stakes settings.

\section{Related Work}

In this section, we briefly review the relevant literature on LLM-powered agents and verification methods.

\subsection{LLM-powered Agents}

LLMs have been deployed as autonomous agents~\citep{yao2022react} for multi-step problem solving, spanning software engineering~\citep{jimenezswe}, web browsing~\citep{nakano2021webgpt}, and computer use~\citep{sager2025comprehensive}. They are also being explored in high-stakes domains such as finance \citep{yang2024finrobot,zhang2024ai} and healthcare~\citep{jin2019pubmedqa,singhal2025toward}. Recent systems increasingly target medical decision-making~\citep{tang2024medagents,kim2024mdagents,chen2025cod,he2025medorch}, with clinical diagnosis as a representative task~\citep{fan2024ai,kyung2025patientsim}. However, evaluations show a substantial gap compared to human experts~\citep{hager2024evaluation,chiu2025simulating}, and their errors are complicated by the multi-step nature, arising from intermediate steps~\citep{zhu2025llm,zhang2025agent}. Therefore, we study verification for agentic decision-making, with clinical diagnosis as a representative high-stakes application, where reliable estimates of correctness probabilities are required for trustworthy deployment~\citep{banerji2023clinical}.

\subsection{LLM and Agent Verification}

Verification methods assess the reliability of model outputs and agent decisions. In high-stakes settings, verifiers should both discriminate correctness with domain expertise and provide well-calibrated estimates for risk-aware use. Prior work mainly focuses on verifying LLM answers, especially for fact-checking~\citep{lin2023generating} and mathematics~\citep{weng2023large}. Learned verifiers, such as outcome and process reward models~\citep{lightman2023lets,pandit2025hard2verify,thatikonda2025logical}, can internalize expert preferences at scale, but require costly annotations and may not generalize under domain shift~\citep{yang2024regularizing}. Training-free approaches are simpler but still limited. Model-based rating, e.g., token probability~\citep{kadavath2022language} and generative score~\citep{zheng2023judging}, depends on the internal knowledge and is often biased or weakly calibrated~\citep{gu2024survey,tian2025overconfidence}. Sampling-based signals from self-consistency~\citep{wangself,manakul2023selfcheckgpt} and semantic entropy~\citep{kuhnsemantic,farquhar2024detecting} reflect variability in sampled outputs, but can be overconfident under consistent mistakes~\citep{hobelsberger2025systematic,tan2025too}. External examinations, including retrieval-based checks~\citep{pan2023qacheck,zhang2024knowhalu} and logical verification~\citep{ling2023deductive,jiang2024forward}, can provide additional evidence, but often yield heterogeneous cues rather than calibrated confidence.

In agentic workflows, beyond training reward models~\citep{wang2024process,yun2025med,jiang2025meds}, verification is often implemented with agent verifiers~\citep{lifshitz2025multi,zhugeagent} and environment-specific checkers. Examples include executing tests for code agents~\citep{huang2023agentcoder}, validating actions in computer-use trajectories \citep{lu2025steve}, and enforcing security or access-control policies as safety guardrails \citep{xiang2024guardagent,chenshieldagent}. While effective in certain tasks, these mechanisms do not directly address open-ended, high-stakes decision-making, where domain expertise must be explicit and calibrated confidence is needed for risk management. In contrast, we provide a framework that grounds verification in domain guidelines, producing trajectory-informed calibrated confidence for these scenarios.

\section{Guideline-Grounded Evidence Accumulation}

In this section, we present GLEAN as a guideline-grounded verification framework for open-ended, high-stakes agentic decision-making, which operationalizes domain protocols into well-calibrated signals of correctness and facilitates active verification under high uncertainty.

\subsection{Verification as Sequential Evidence Accumulation}

In agentic workflows, decisions and answers are essentially derived from the execution trajectories involving rationales and actions. Motivated by this multi-step nature in agent execution, we formulate agent verification as sequential evidence accumulation, where each step contributes incremental information toward the correctness of the final output.

Given an initial input, an agent interacts with the environment for $T$ steps, producing a trajectory with observations and actions
$\tau_{1:T}=\{o_t,a_t\}_{t=1}^T$ and a final output $y$.
We introduce a binary latent variable $Z\in\{1,0\}$ indicating whether $y$ is correct or not. At each step $t$, we maintain a posterior probability of correctness conditioned on the trajectory prefix and the final prediction, defined as
\begin{equation}
    p_t:= P(Z=1|\tau_{1:t},y),
\end{equation}
where $\tau_{1:t}$ is the $t$-step prefix. This quantifies the verifier's confidence that the ongoing trajectory leading to the final answer is right. In our work, we ultimately use $p_T$ as the signal for such confidence in correctness to verify the final output. Below, we omit $y$ in condition for notation simplicity.

Using Bayes' rule, we derive a form of sequential evidence accumulation to decompose the correctness probability into step-wise information. We take the logit of the posterior to avoid calculating the partition function~\citep{gold2007neural,karamched2020bayesian}, following
\begin{equation}
    \underbrace{\log \frac{p_t}{1-p_t}}_{\displaystyle\ell_t} = \underbrace{\log \frac{p_{t-1}}{1-p_{t-1}}}_{\displaystyle\ell_{t-1}}+\underbrace{\log\frac{P(o_t, a_t \mid Z=1, \tau_{1:t-1})}{P(o_t, a_t \mid Z=0, \tau_{1:t-1})}}_{\displaystyle e_t},
\label{eq:accumulation}
\end{equation}
where $e_t$ represents the incremental evidence at step $t$ and can be gathered through this additive structure to an accumulated evidence $\ell_t$ for verification. Importantly, this formulation is inherently probabilistic. If $e_t$ and $\ell_t$ were available, the final estimate $p_T$ would be well-calibrated.

\subsection{Guideline-Grounded Surrogate Evidence}
\label{sec:analysis}
However, in open-ended agentic settings, $e_t$, computed with the likelihood of observations and actions, is generally intractable, since these variables can be high-dimensional and partially described. Intuitively, $e_t$ captures how the current step supports an acceptable outcome, which motivates us to construct step-wise surrogate signals that assess whether the agent’s behavior is reasonable for the answer and align them with well-calibrated confidence for correctness.

This corresponds to our motivation in~\cref{sec:intro} that, in high-stakes professional scenarios, guidelines exist as explicit standards that practitioners follow to make decisions. They are usually expert-curated protocols and auditable criteria, specifying what constitutes proper evidence and providing clear instructions on actions under the domain rules, which make them a potential source for step-wise verification signals grounded in external knowledge. 

We acquire such signals with an implementation of model-based rating. For each trajectory, we retrieve a guideline $g$ relevant to the given context and the final answer from an external guideline set $\mathcal{G}$. At each step $t$ with $(o_t, a_t)$ after a prefix $\tau_{1:t-1}$, we prompt an LLM judge $J$ to score the current step given the corresponding guideline.
Specifically, following a common practice~\citep{kadavath2022language,fanconicascaded}, we ask the judge whether the step following the history aligns with the provided guideline and extract the token probability over a discrete label set \texttt{\{YES, NO\}}, resulting in a scalar rating $s_{t,g}=$
\begin{equation}
    \frac{\mathcal{J}(\texttt{YES}|\tau_{1:t-1},o_t,a_t,g)}{\mathcal{J}(\texttt{YES}|\tau_{1:t-1},o_t,a_t,g)+\mathcal{J}(\texttt{NO}|\tau_{1:t-1},o_t,a_t,g)}.
\label{eq:ptrue}
\end{equation}
As model-based ratings tend to be miscalibrated and lack reliable probabilistic semantics, we need to learn a calibration function that maps them to signals reflecting well-calibrated probabilities. Since ground-truth $e_t$ is generally unavailable, we do not have step-wise supervision, but instead, only have a trajectory-level correctness label $Z$ for the final outcome, which defines the accumulated evidence $\ell_t$ in~\cref{eq:accumulation}. Accordingly, we construct a prefix-level surrogate evidence $S_t$ from the step-wise ratings $\{s_{i,g}\}_{i=1}^t$ and learn to calibrate it to align $\ell_t$. This inherently encourages $S_t$ to aggregate step-wise contributions to the accumulated evidence.

In practice, supervision remains limited in high-stakes scenarios, even for trajectory-level correctness labels, making simple and data-efficient calibration essential. We hereby discuss the properties required of $S_t$ for lightweight calibration, and then show that guideline grounding indeed provides more information and a better potential for this.

\textbf{Remark.} For any prefix signal $S_t$ in a trajectory, if it meets
\begin{enumerate}[nolistsep]
    \item \emph{Approximate Sufficiency:} $S_t$ captures most of the information in the prefix for predicting correctness, i.e., $P(Z=1\mid \tau_{1:t}) \approx P(Z=1\mid S_t)$; 
    \item \emph{Monotonic near-linearity:} $S_t$ exhibits a monotonically increasing and near-linear relationship with correctness in logit space, i.e., $\log \frac{P(Z=1|S_t)}{1-P(Z=1|S_t)}\approx aS_t+c,\,a>0$,
\end{enumerate}
then a low-capacity linear calibrator suffices to provide an adequate estimate of $\ell_t$ from $S_t$ with scarce supervision. We present an analysis of the error bound in Appendix~\ref{sec:error}.

\begin{figure}
    \centering
    \includegraphics[width=\linewidth]{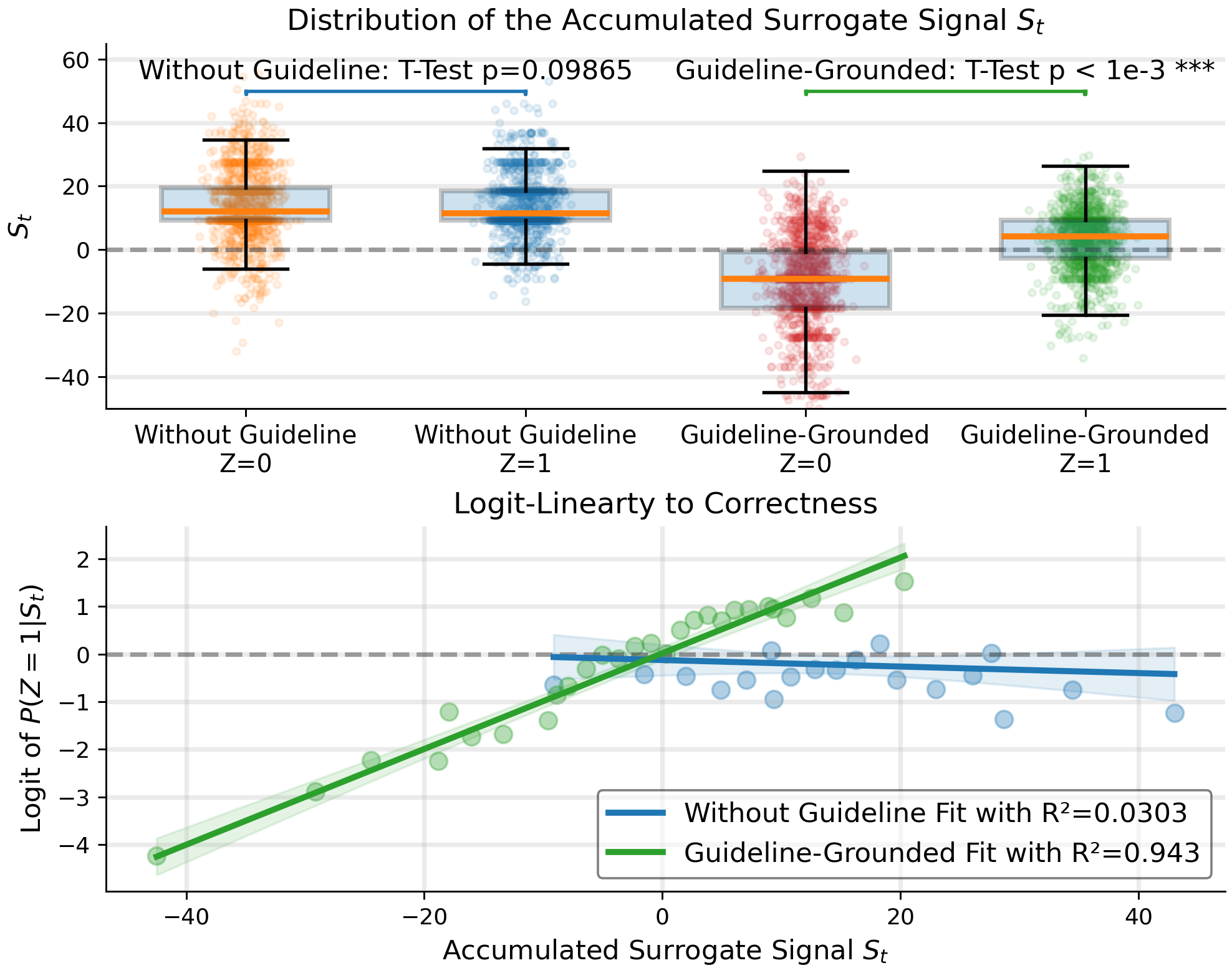}
    \vspace{-3ex}
    \caption{\textbf{Properties of guideline-grounded signals.} Top: Signals grounded in guidelines significantly discriminate correct from incorrect prefixes, while uninformed signals do not. Bottom: Guideline-grounded signals exhibit strong logit-linearity with correctness, but uninformed ones lack this property.}
    \label{fig:analysis}
    \vspace{-2ex}
\end{figure}

\textbf{Do guideline-grounded signals satisfy these?} Motivated by the additivity of evidence in logit space and the common practice of logit-based calibration~\citep{guo2017calibration,kull2017beta}, we study a simple construction for the accumulated surrogate evidence $S_t=\sum_{i=1}^t \log \frac{s_{i,g}}{1-s_{i,g}}$, which sums up the logits of step-wise guideline-grounded ratings for a prefix $\tau_{1:t}$. Empirically, we validate the desired properties of this signal $S_t$ in agentic clinical diagnosis~\citep{hager2024evaluation}. Concretely, we sample 5,000 prefixes from trajectories generated by Qwen3-30B-A3B-Instruct~\citep{yang2025qwen3} and examine the guideline-grounded surrogate evidence, where the judge is provided with a guideline $g=g_y\in\mathcal{G}$ retrieved based on the final diagnosis $y$. Besides, we also include an uninformed variant, where the judge provides ratings based on its internal knowledge, i.e., $g=\varnothing$, to better examine the contribution of guidelines.

Subsequently, we examine whether the accumulated surrogate evidence $S_t$ is discriminative of correctness and whether the logit of $P(Z=1|S_t)$ is linearly related to $S_t$. As shown in~\cref{fig:analysis}, signals without guidelines do not yield statistically significant separation between correct and incorrect prefixes, and they demonstrate a weak monotonic or logit-linear trend with correctness. This suggests that model-based ratings alone can be unreliable as verification signals, confirming the issue of biased and inconsistent evaluation~\citep{gu2024survey}. In contrast, guideline grounding substantially enhances both desired properties, showing highly significant discrimination between prefixes ($p<1e-3$) and an approximately linear relationship with correctness in logit space ($R^2=0.943$). Together, these results support guideline-grounded signals as informative surrogate evidence, which also confirms our motivation of using guidelines as external knowledge for verification, and further justify the employment of a simple linear calibrator to obtain well-calibrated probabilistic estimates.

\begin{figure}[t]
    \centering
    \includegraphics[width=\linewidth]{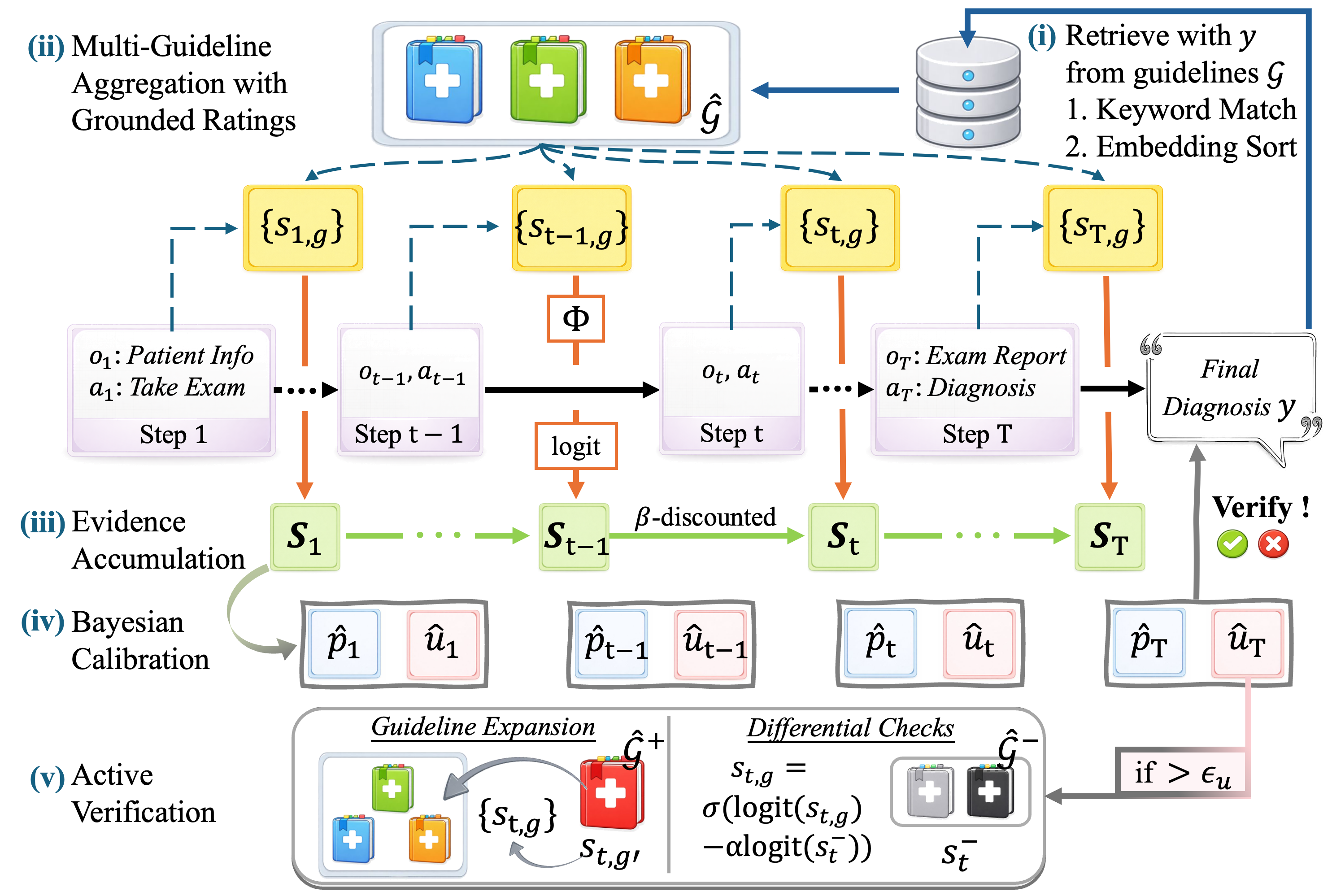}
    \caption{\textbf{Pipeline of GLEAN for clinical diagnosis.} GLEAN \textcolor{MidnightBlue}{(i)} retrieves guidelines for the final diagnosis, \textcolor{MidnightBlue}{(ii)} aggregates step-wise scores for alignment with multiple guidelines,  \textcolor{MidnightBlue}{(iii)} accumulates them into $\beta$-discounted evidence, which is \textcolor{MidnightBlue}{(iv)} calibrated to yield confidence and uncertainty. \textcolor{MidnightBlue}{(v)} High uncertainty then triggers active verification via guideline expansion and differential checks.}
    \label{fig:pipeline}
    \vspace{-1ex}
\end{figure}

\subsection{Robust and Active Evidence Accumulation}

While \cref{sec:analysis} suggests that guideline-grounded scores can be informative and approximately logit-linear, the sufficiency of a single rating from only one guideline can vary in practice due to noisy judgments and imperfect guideline retrieval, which motivates us to develop more reliable surrogate evidence, composing our practical framework of GLEAN as summarized in~\cref{fig:pipeline} and~\cref{alg:GLEAN}.

\textbf{Multi-Guideline Aggregation.} Beyond single-guideline scoring, we use a set of retrieved guidelines and aggregate their step-wise ratings into robust statistics to reduce variance in grounded evidence. For each trajectory, we can acquire a group of guidelines $\hat{\mathcal{G}}\in\mathcal{G}$ relevant to support verification and get multiple ratings $\{s_{t,g}\}_{g\in\hat{\mathcal{G}}}$ accordingly for every step. To preclude the impact from the varying numbers of available guidelines, we aggregate the guideline ratings into a single step-level feature
\begin{equation}
    \mathbf{s}_t = \Phi(\{s_{t,g}\}_{g\in\hat{\mathcal{G}}})\in[0,1]^d,
\end{equation}
where $d$ statistics (e.g., average, minimum) are extracted from the ratings. Given the availability of guidelines and the verification budget, we can set the number of guidelines for rating with $|\hat{\mathcal{G}}|=K$. Note that verification with one single guideline is a special case of our method, where the rating is a scalar feature.  We then accumulate the step-level features into a prefix-level signal that summarizes the grounded evidence up to the current step. To better mitigate noises introduced by early-step deviations due to limited information, we apply a discounted sum for evidence accumulation in practice, i.e., $\mathbf{S}_t = \sum_{i=1}^t\beta^{t-i}\log\frac{\mathbf{s}_i}{1-\mathbf{s}_i}$, with $\beta$ as the discount factor. We show that the discounted accumulation does not alter the properties in Appendix~\ref{sec:property}.

As indicated in~\cref{sec:analysis}, while the scale of $\mathbf{S}_t$ is not in $[0,1]$, it can be mapped to the probability of correctness through calibration with a linear model. Given a labeled dataset $\mathcal{D}$ with evidence  $\mathbf{S}_t$ and label $Z$ as a sample, we adopt Bayesian logistic regression for the calibrator:
\begin{equation}
P(Z=1\mid \mathbf{S}_t,\mathbf{w},b)=\sigma(\mathbf{w}^{\top}\mathbf{S}_t+b),
\end{equation}
where $\sigma$ denotes the sigmoid function and the Gaussian prior regularizes the calibrator with $\mathbf{w}\sim \mathcal{N}(\mathbf{0},\lambda^{-1}\mathbf{I}_d)$ and $b\sim\mathcal{N}(0,\lambda^{-1})$. We draw samples from their posterior $p(\mathbf{w},b|\mathcal{D})$ with Markov chain Monte Carlo (MCMC)~\citep{andrieu2003introduction} and obtain a calibrated probability estimate based on the expectation over this distribution. Concretely, we provide a confidence for verification following
\begin{equation}
    \hat{p}_T = \mathbb{E}_{\mathbf{w},b\sim p(\cdot|\mathcal{D})}[\sigma(\mathbf{w}^{\top}\mathbf{S}_T+b)],
\end{equation}
where $S_T$ is the accumulated signal up to the final answer.

\textbf{Uncertainty-Triggered Active Verification.} While aggregating multiple guidelines improves robustness to randomness and single-guideline bias, surrogate evidence can still be insufficient for difficult cases, especially when subtle errors are missed due to incomplete coverage or when the evidence is not sample-specific enough. Fortunately, well-calibrated probabilities yield reliable uncertainty estimates, which allow us to trigger active verification appropriately. If the uncertainty $\hat{u}_T$ at the last step (e.g., entropy $H(\hat{p}_T)$) exceeds a threshold $\epsilon_u$, we can collect additional evidence to strengthen verification. Below, we introduce two complementary strategies targeting the aforementioned limitations.

To address the insufficient information provided by the collected guidelines, we conduct \emph{guideline expansion}. We broaden the evidence pool by retrieving additional guidelines that are relevant to the current context or the final answer. Let $\hat{\mathcal{G}}^+$ denote the expanded guideline set. When we obtain ratings for the newly collected guidelines, we can correct the statistics over $\hat{\mathcal{G}}\cup\hat{\mathcal{G}}^+$ and the accumulated signals with the supplementary information from the new guidelines, leading to a re-evaluated confidence in the already learned calibrator. This strategy stabilizes the surrogate evidence by incorporating more knowledge for samples with uncertainty stemming from insufficient examination.  

Meanwhile, to overcome the issue of weak specificity in retrieved guidelines, we perform \emph{differential checks} with guidelines of competitive alternatives. Ideally, a trajectory with the final answer $y$ should align substantially better with guidelines that support $y$ than with those supporting competing outcomes. Otherwise, high scores may be driven by generally plausible standards that also fit alternatives. To rectify such false support, we actively retrieve competitive outcomes $\{y'\neq y\}$ and their corresponding guidelines $\hat{\mathcal{G}}^-$. For each step, we obtain competitive scores $\{s^-_{t,g}\}_{g\in\hat{\mathcal{G}}^-}$ and correct each existing score in logit space with the maximum score $s_t^-$ across the competitive guidelines, following
\begin{equation}
    \forall g \in \hat{\mathcal{G}}, s^{rec}_{t,g} = \sigma\left(\log\frac{s_{t,g}}{1-s_{t,g}}-\alpha\log\frac{ s_t^-}{1-s_{t}^-}\right),
\label{eq:correction}
\end{equation}
where $\alpha\ge0$ is a rectification factor. The rectified scores are then used for subsequent aggregation and evidence accumulation. Intuitively, strong competitive alignment decreases the effective support for the answer, incorporating counter-evidence and preventing over-confidence when the trajectory is better explained by an alternative outcome.

\section{Experiments}

In this section, we evaluate our verification framework in agentic clinical diagnosis~\citep{hager2024evaluation}, a representative high-stakes decision-making setting, to show its effectiveness. Diagnosis is typically process-informed and risk-sensitive, and it is rich in protocols, with publicly available clinical guidelines serving as explicit, auditable standards for verification, making it a natural testbed for GLEAN.

\subsection{Experimental Setup}

We hereby introduce experimental setups, covering datasets, implementation details, evaluation metrics, and baselines.

\textbf{Task and Datasets.} We study three MIMIC diseases~\citep{johnson2023mimic}, i.e., diverticulitis, cholecystitis, and pancreatitis, using the ReAct-style agent workflow by~\citet{hager2024evaluation}, where the agent iteratively reasons over patient information, requests examinations, and outputs a final diagnosis. Due to MIMIC restrictions on commercial LLMs, we use Qwen2.5-7B-Instruct and Qwen3-30B-A3B-Instruct as backbones, generating multiple trajectories per case at a temperature of 0.9. A trajectory is correct if its final diagnosis matches the ground-truth disease. For each backbone, we evaluate 2000 trajectories with a 50/50 correct–incorrect split, balanced across cases and diseases. We use the guideline dataset\footnote{https://huggingface.co/datasets/epfl-llm/guidelines} from medical pretraining~\citep{chen2023meditron}, which contains high-quality clinical practice guidelines.

\textbf{Implementation of GLEAN.} For controlled evaluation, we use the same backbone as the agent for GLEAN and methods with model-based rating. We provide the detailed prompt for step-wise rating in Appendix~\ref{sec:implementation}. We retrieve guidelines by keyword-matching with titles and then rerank them by \texttt{all-mpnet-base-v2} embedding similarity. We fix the number of guidelines used for aggregation to $K=1,3$, with aggregation $\Phi(\{s_{t,g}\})=[\min,\text{avg}]_{g\in\hat{\mathcal{G}}} s_{t,g}$. The Bayesian logistic regression calibrator is trained on 100 labeled trajectories and we draw 2000 times from the posterior to obtain calibrated confidence and uncertainty. We use entropy for uncertainty and perform active verification using the calibrator with $K=3$. For triggered samples, we add one guideline for expansion and two competitive guidelines for differential checks. We set factors $\beta=0.5$ and $\alpha=0.2$. 

\textbf{Metrics.} To evaluate verification signals by GLEAN, we report AUROC for discrimination, Risk\texttt{@}0.5 as the error rate on the top 50\% most confident samples, and calibration metrics, including Expected calibration error (ECE)~\citep{naeini2015obtaining} and the Brier score~\citep{glenn1950verification}. We also assess GLEAN's utility via Best-of-N. For each case, we sample 16 trajectories and select the one scored the highest by the verifier, reporting accuracy for N$\in\{4,8,16\}$.

\textbf{Baselines.} We compare GLEAN against diverse baselines. We take $P(\texttt{TRUE})$~\citep{kadavath2022language} that uses the token likelihood of correctness from the verifier, and LLM-as-a-Judge~\citep{zheng2023judging} with prompted generative scoring for model-based rating methods. Sampling-based baselines include Self-Consistency~\citep{wangself} and Semantic Entropy~\citep{farquhar2024detecting}, which derive confidence from agreement and distributional entropy over 16 sampled trajectories. Besides, we adopt Self-Verification~\citep{weng2023large}, which explicitly re-checks whether the answer supports the observations, and RAG-augmented rating, which incorporates the same retrieved guideline based on $P(\texttt{TRUE})$, as typical methods for external examinations. For learned verifiers, due to the lack of reward models for agentic clinical diagnosis, we take Med-PRM~\citep{yun2025med} as an example, which was trained on medical QA and also train an outcome reward model (ORM) with Qwen2.5-7B-Instruct on the same calibration set as GLEAN.

\begin{table*}[t]
\centering
\setlength{\tabcolsep}{2.5pt}
\caption{\textbf{Verification performance across three diseases for two agent backbones.} We report AUROC, Risk\texttt{@}0.5, ECE, and Brier. We omit calibration metrics for Semantic Entropy since it is not a probability. For GLEAN, we report active verification with an entropy threshold of 0.5, and also a zero-threshold setting as an upper bound (\textcolor{gray}{gray}). Best results are in \textbf{bold}, and second-best are \underline{underlined}.}

\resizebox{\linewidth}{!}{
\normalsize
\begin{tabular}{lcccccccccccc}
\toprule[1.5pt]
 & \multicolumn{4}{c}{\bf Diverticulitis} & \multicolumn{4}{c}{\bf Cholecystitis} & \multicolumn{4}{c}{\bf Pancreatitis}  \\
\cmidrule(lr){2-5}\cmidrule(lr){6-9}\cmidrule(lr){10-13}
Method
& AUROC $\uparrow$ & Risk\texttt{@}0.5 & $\downarrow$  ECE $\downarrow$ & Brier $\downarrow$
& AUROC $\uparrow$ & Risk\texttt{@}0.5 & $\downarrow$  ECE $\downarrow$ & Brier $\downarrow$
& AUROC $\uparrow$ & Risk\texttt{@}0.5 & $\downarrow$  ECE $\downarrow$ & Brier $\downarrow$\\
\midrule
\multicolumn{13}{c}{Agent Backbone: \textit{Qwen2.5-7B-Instruct}} \\
\midrule
$P(\texttt{TRUE})$           & 0.7280               & 0.3642               & 0.3445                & 0.3379               & 0.6225               & 0.2810               & 0.3640               & 0.3715               & 0.6565               & 0.5610               & 0.4009               & 0.4103               \\
LLM-as-a-Judge    & 0.7709               & 0.3457               & 0.2558                & 0.2859               & 0.6426               & 0.2557               & 0.0917               & 0.2259               & 0.6528               & 0.5523               & 0.3701               & 0.3549               \\
Self-Consistency  & 0.9011               & 0.2160               & 0.1880                & 0.1677               & 0.7978               & 0.1646               & \underline{0.0834}               & 0.1484               & 0.7621               & 0.4709               & 0.2582               & 0.2446               \\
Semantic Entropy  & 0.8573               & 0.2531               & -- & -- & 0.7772               & 0.1747               & -- & -- & 0.7174               & 0.4884               & -- & -- \\
Self-Verification & 0.7891               & 0.3148               & 0.1198                & 0.2012               & 0.7576               & 0.1722               & 0.2210               & 0.2299               & 0.6809               & 0.5262               & 0.1071               & 0.2298               \\
RAG-Augmented     & 0.8358               & 0.2284               & 0.2304                & 0.2271               & 0.7571               & 0.1899               & 0.2491               & 0.2598               & 0.8461               & 0.4186               & 0.2484               & 0.2531               \\
Med-PRM           & 0.7111               & 0.3704               & 0.4851                & 0.4703               & 0.6475               & 0.2582               & 0.3010               & 0.3107               & 0.5877               & 0.5930               & 0.5797               & 0.5608               \\
ORM               &  0.7793      &   0.3203     &  0.1594      &  0.2168      &  0.8190      &  0.1364      &  0.1283      &   0.1846       & 0.7002       &  0.5108      &  0.1964      & 0.2434   \\\midrule
GLEAN ($K=1$)         & 0.8919               & 0.2469               & \bf 0.0502                & 0.1359               & 0.8524               & 0.1089               & \bf 0.0641               & 0.1495               & 0.8248               & 0.4331               & \underline{0.0933}               & 0.1714               \\
GLEAN ($K=3$)         & \underline{0.9568}               & \underline{0.1667}               & 0.0724                & \underline{0.0892}               & \underline{0.8929}               & \underline{0.0987}               & 0.0893               & \underline{0.1174}               & \underline{0.9078}               & \underline{0.3582}               & 0.0994               & \underline{0.1318}               \\
+Active ($\epsilon_u=0.5$)      & \bf 0.9756               & \bf 0.1049               & \underline{0.0686}                & \bf 0.0586               & \bf 0.9063               & \bf 0.0962               & 0.1067               & \bf 0.1124               & \bf 0.9325               & \bf 0.3343               & \bf 0.0701               & \bf 0.1031            \\ 
\rowcolor{gray!20}+Active ($\epsilon_u=0.0$)      & 0.9869               & 0.1049               & 0.0673                & 0.0522               & 0.9159               & 0.0886               & 0.1206               & 0.1116               & 0.9471               & 0.3211               & 0.0760               & 0.0953            \\   
\midrule
\multicolumn{13}{c}{Agent Backbone: \textit{Qwen3-30B-A3B-Instruct}} \\
\midrule
$P(\texttt{TRUE})$           & 0.7851 & 0.2840 & 0.2392 & 0.2488 & 0.7384 & 0.2222 & 0.2758 & 0.2927 & 0.7510 & 0.4519 & 0.2738 & 0.2815 \\
LLM-as-a-Judge    & 0.8166 & 0.2654 & 0.1172 & 0.1910 & 0.7548 & 0.2197 & \bf 0.0657 & 0.2007 & 0.7663 & 0.4169 & 0.1488 & 0.2083 \\
Self-Consistency  & 0.9131 & 0.1667 & 0.1889 & 0.1608 & 0.8493 & 0.1439 & 0.1019 & 0.1534 & 0.8232 & 0.3732 & 0.2088 & 0.2096 \\
Semantic Entropy  & 0.8128 & 0.2346 &    --    &    --    & 0.8448 & 0.1566 &   --     &    --    & 0.7991 & 0.3644 & --       &   --     \\
Self-Verification & 0.7681 & 0.3333 & 0.2416 & 0.2655 & 0.7907 & 0.2045 & 0.1495 & 0.2057 & 0.7567 & 0.4344 & 0.3419 & 0.3241 \\
RAG-Augmented     & 0.8577 & 0.2222 & 0.1816 & 0.1916 & 0.7896 & 0.1894 & 0.2575 & 0.2709 & 0.8590 & 0.3353 & 0.1671 & 0.1841 \\
Med-PRM           & 0.8099 & 0.2531 & 0.4815 & 0.4684 & 0.7519 & 0.2146 & 0.3670 & 0.3626 & 0.6113 & 0.5277 & 0.5654 & 0.5515 \\
ORM               &  0.8228      & 0.2614   & 0.1939   &  0.2381  &  0.8332  & 0.1555   &  0.1922  & 0.2146   &  0.7146  &  0.4492  & 0.2109   &  0.2670   \\\midrule
GLEAN ($K=1$)        & 0.9147 & 0.1790 & 0.1077 & 0.1263 & 0.8888 & 0.1111 & 0.0849 & 0.1314 & 0.8367 & 0.3586 & 0.0953 & 0.1696 \\
GLEAN ($K=3$)        & \underline{0.9794} & \underline{0.0741} & \underline{0.1051} & \underline{0.0632} & \underline{0.9559} & \underline{0.0505} & 0.0718 & \underline{0.0790} & \underline{0.8899} & \underline{0.3003} & \underline{0.0777} & \underline{0.1350} \\
+Active ($\epsilon_u=0.5$)     & \bf 0.9862 & \bf 0.0370 & \bf 0.1031 &\bf 0.0453 & \bf 0.9582 & \bf 0.0480 & \underline{0.0696} & \bf0.0753 & \bf0.9061 & \bf 0.2828 & \bf 0.0747 & \bf0.1194 \\
\rowcolor{gray!20}+Active ($\epsilon_u=0.0$)     & 0.9975 & 0.0309 & 0.1110 & 0.0393 & 0.9648 & 0.0379 & 0.0750 & 0.0753 & 0.9119 & 0.2915 & 0.0722 & 0.1175 \\
\bottomrule[1.5pt]
\end{tabular}}
\label{tab:verif_main}
\end{table*}

\subsection{Main Results}

\cref{tab:verif_main} summarizes verification performance across three diseases and two agent backbones. Overall, GLEAN shows competitive performance with a single retrieved guideline per case ($K=1$), yielding stronger discrimination (AUROC$>$0.82) and improved calibration (ECE$<$0.11). In contrast, baselines that mainly rely on the verifier’s implicit knowledge (e.g., $P(\texttt{True})$, LLM-as-a-Judge, and Self-Verification) underperform, indicating that internal criteria alone are insufficient for reliable high-stakes diagnosis verification. Sampling-based verifiers are notably less effective on harder diseases such as cholecystitis and pancreatitis, where consistent yet incorrect generations degrade discrimination. RAG-augmented verification confirms the benefits of guideline retrieval but still lags behind GLEAN, highlighting the importance of trajectory-aware evidence accumulation and explicit calibration. Finally, Med-PRM shows limited transfer to this agentic diagnosis setting, likely due to task and domain mismatch, while ORM achieves better results due to additional training, which are still inferior to GLEAN with the same amount of training data. This confirms the data efficiency provided by guideline grounding.

Building on the strong results with a single guideline, multi-guideline aggregation ($K=3$) further improves robustness, consistently increasing AUROC and reducing Risk\texttt{@}0.5 as well as Brier score, without significantly changing ECE. For instance, on Qwen3-30B for Diverticulitis, aggregation boosts AUROC from 0.9147 to 0.9789 while reducing Risk\texttt{@}0.5 from 0.1790 to 0.0802, with Brier dropping from 0.1263 to 0.0647. With a threshold of $\epsilon_u=0.5$, active verification further improves AUROC and reduces risk beyond passive aggregation, with AUROC reaching 0.9856 and Risk\texttt{@}0.5 dropping to 0.0494 for the former example, while it also helps with calibration in terms of the Brier score, suggesting that the selectively collected evidence better facilitates verification under high uncertainty. We also provide results with $\epsilon_u=0.0$, which suggests a performance upper bound when active verification is applied to all samples, indicating the potential with sufficient budget.

\subsection{Detailed Analysis}

We present analyses to confirm the effectiveness
of our framework and justify the soundness of the design. Below, we report metrics by gathering samples from all diseases.

\textbf{Active Verification.} The ablation in~\cref{tab:active} confirms that the improvements from active verification come from two complementary sources of evidence. While each strategy alone provides consistent improvements over multi-guideline aggregation, combining them yields the strongest overall performance, indicating that coverage and specificity errors are both salient for diagnosis. Further, \cref{fig:active} shows curves of performance to trigger ratios. This verifies that increasing the number of samples improves AUROC and reduces risk, but the gains are mostly achieved with a relatively small trigger ratio. Beyond that, improvements become marginal as it approaches full activation. Together, these results support uncertainty-triggered active verification as an effective form of test-time scaling.

\textbf{Best-of-N Selection.} Results in~\cref{fig:bon} further highlight the practical value of verification signals from GLEAN as a ranking mechanism. While baselines often yield smaller and more saturating improvements, GLEAN, especially with multi-guideline aggregation, consistently improves performance, effectively exploiting additional sampled candidates. For Qwen3-30B, the accuracy increases steadily from 58.94\% to 73.8\% / 76.7\% / 78.3\%, while self-consistency only achieves 70.0\%. This suggests that GLEAN is not only informative for correctness assessment, but also directly usable to improve practical agentic decision-making.

\begin{figure*}[t]
    \centering
    \begin{minipage}{0.61\textwidth}
        \centering
        \includegraphics[width=\linewidth]{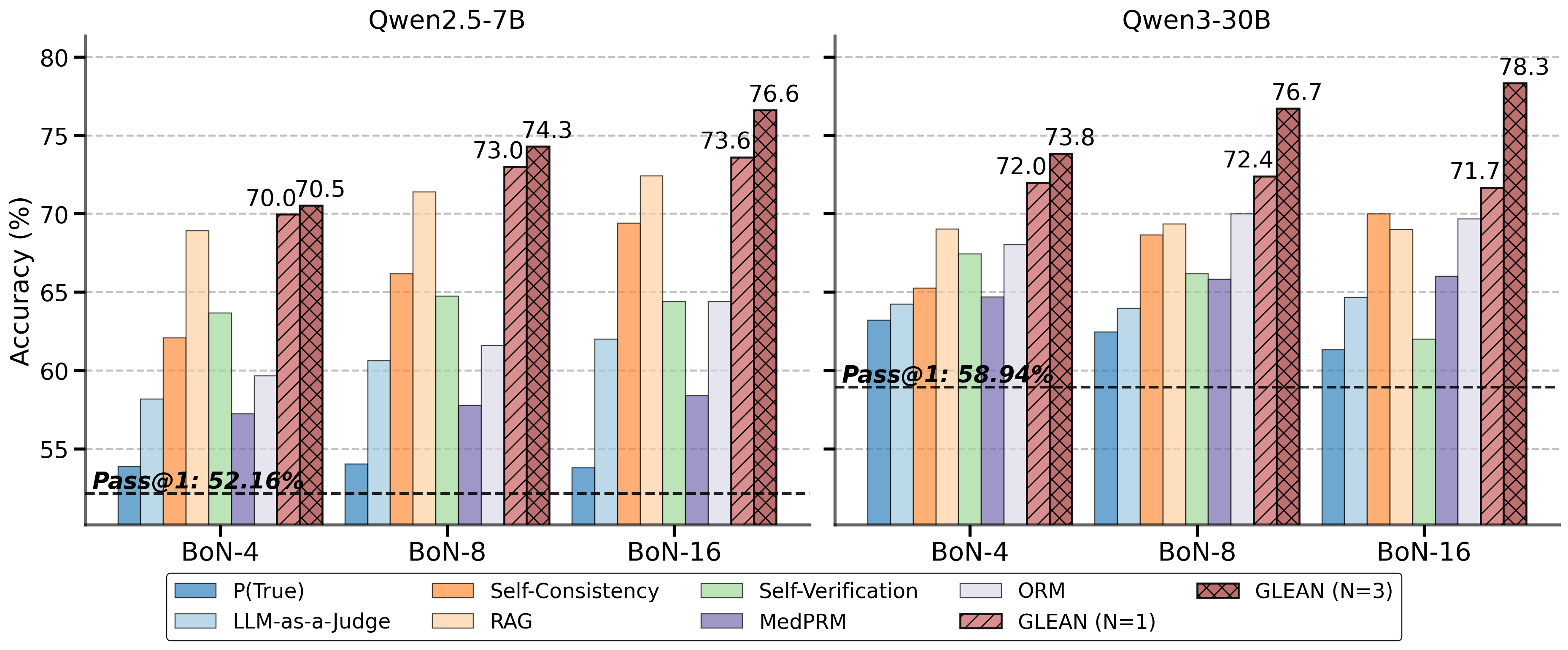}
        \vspace{-3ex}
        \caption{\textbf{Accuracy of Best-of-N selection using different signals.}}
        \label{fig:bon}
    \end{minipage}
    \hfill
    \begin{minipage}{0.385\textwidth}
        \centering
        \vspace{1ex}
        \includegraphics[width=\linewidth]{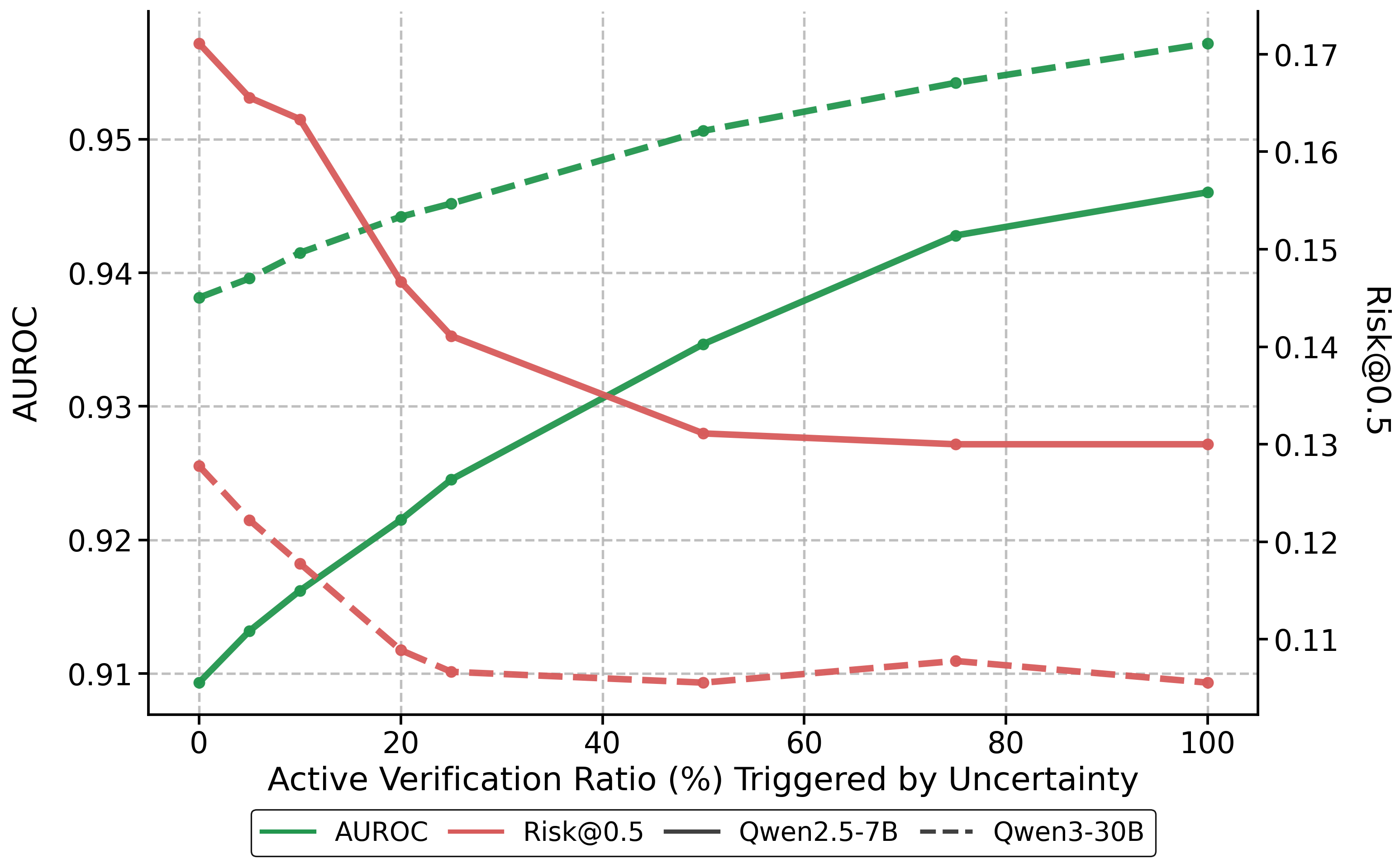}
        \vspace{-2.7ex}
        \caption{\textbf{Performance with active verification.}}
        \label{fig:active}
    \end{minipage}
\vspace{-3ex}
\end{figure*}

\begin{table}[t]
\centering
\caption{\textbf{Ablation on active verification strategies.}}

\small
\setlength{\tabcolsep}{6pt}
\resizebox{\linewidth}{!}{
\begin{tabular}{lcccc}
\toprule[1.5pt]
\textbf{Strategy} & {\textbf{AUROC} $\uparrow$} & {\textbf{Risk\texttt{@}0.5} $\downarrow$} & {\textbf{ECE} $\downarrow$} & {\textbf{Brier} $\downarrow$} \\
\midrule
\multicolumn{5}{c}{Agent Backbone: \textit{Qwen2.5-7B-Instruct}}\\
\midrule
GLEAN ($K=3$)  &  0.9137  &  0.1678 &  \textbf{0.0418} & 0.1178 \\
w. GE & 0.9322 & 0.1422 & 0.0564 & 0.1050 \\
w. DC & 0.9304 & 0.1444 & 0.0565 & 0.1062 \\
w. Both & \textbf{0.9470} & \textbf{0.1300} & 0.0700 & \textbf{0.0947} \\\midrule
\multicolumn{5}{c}{Agent Backbone: \textit{Qwen3-30B-A3B-Instruct}}\\
\midrule
GLEAN ($K=3$)  &  0.9389  &  0.1267 &  \textbf{0.0550} & 0.0975 \\
w. GE & 0.9467 & 0.1167 & 0.0625 & 0.0908\\
w. DC & 0.9477 & 0.1144 & 0.0841 & 0.0995 \\
w. Both & \textbf{0.9557} & \textbf{0.1078} & 0.0653 & \textbf{0.0849} \\
\bottomrule[1.5pt]
\end{tabular}}
\label{tab:active}
\vspace{-2ex}
\end{table}

\textbf{Components in GLEAN.} The upper panel in~\cref{tab:ablation} presents a component-wise ablation of our framework with Qwen2.5-7B by progressively adding key ingredients. Starting from the baseline of $P(\texttt{True})$ derived from the model's own token probability for the agent output, the trajectory context brings a modest improvement, suggesting that being process-informed helps verification to a limited extent. Incorporating guideline grounding leads to substantial improvements in discrimination and lower risk, highlighting the importance of introducing explicit domain knowledge rather than relying on implicit criteria. Further, accumulating step-wise evidence along the trajectory further strengthens performance. Eventually, after calibration, which comprises the complete GLEAN, while maintaining AUROC and Risk\texttt{@}0.5, it significantly improves ECE and Brier, mapping the surrogate evidence to well-calibrated estimates.

\textbf{Guideline Quality.} The lower panel in~\cref{tab:ablation} studies the impact of guideline quality. Using only an empty placeholder performs poorly, while replacing it with a short abstract already yields a clear improvement across all metrics, indicating that even coarse protocol signals can aid verification. Importantly, detailed guideline content yields greater gains in both discrimination and calibration (e.g., AUROC from 0.7573 to 0.8574 and ECE from 0.0918 to 0.0360 for Qwen2.5-7B), demonstrating that richer and more specific domain protocols lead to more informative evidence.

\begin{table}[t]
\centering
\caption{\textbf{Ablations on component design and guideline quality.}}

\small
\setlength{\tabcolsep}{6pt}
\resizebox{\linewidth}{!}{
\begin{tabular}{lcccc}
\toprule[1.5pt]
\textbf{Method} & {\textbf{AUROC} $\uparrow$} & {\textbf{Risk\texttt{@}0.5} $\downarrow$} & {\textbf{ECE} $\downarrow$} & {\textbf{Brier} $\downarrow$} \\
\midrule
\multicolumn{5}{c}{\textit{Components in GLEAN}}\\
\midrule
$P(\texttt{TRUE})$  & 0.6497 & 0.3933 & 0.3719 & 0.3803 \\
+ Process & 0.6935 & 0.3500 & 0.3655 & 0.3721 \\
+ Guideline & 0.7944 & 0.2755 & 0.2613 & 0.2703 \\
+ Accumulation & 0.8547 & \textbf{0.2367} & 0.2233 & 0.2367 \\
+ Calibration & \textbf{0.8574} & 0.2378 & \textbf{0.0360} & \textbf{0.1554} \\\midrule
\multicolumn{5}{c}{\textit{Guideline Quality}} \\\midrule
Empty Guideline & 0.6321 & 0.4000 & 0.0918 & 0.2416\\
Short Abstract & 0.7573 & 0.3122 & 0.0626 & 0.2043 \\
Detailed Content & \textbf{0.8574} & \textbf{0.2378} & \textbf{0.0360} & \textbf{0.1554} \\
\bottomrule[1.5pt]
\end{tabular}}
\label{tab:ablation}
\vspace{-2.5ex}
\end{table}

\textbf{Expert Study.} 
Three expert clinicians independently reviewed 50 agent trajectories with GLEAN's guideline-grounded confidence scores, annotating the first erroneous step and rating interpretability and clinical utility on 5-point scales. Overall usefulness of GLEAN as a useful verification tool was rated 4.67/5. Clinician's rated interpretability favorably ($M{=}4.36$, $SD{=}0.79$) and found confidence scores clinically useful ($M{=}4.12$, $SD{=}0.82$). Additionally, 87\% of interpretability ratings and 90\% of clinical utility ratings fell within one point, indicating substantive consensus between clinicians on GLEAN's value. Inter-rater reliability on diagnosis error identification was substantial (Cohen's $\kappa{=}0.78$). Clinicians also qualitatively noted that ``confidence metrics were useful to know where to review'' and that uncertainty ``reflected the agent fixating on one possible diagnosis.'' Details of the expert study are in Appendix~\ref{sec:expert}

\textbf{Additional experiments.} We provide additional experiments in Appendix~\ref{sec:addtional} on certain designs in GLEAN, cross-agent verification and computational costs, along with detailed results for Best-of-N selection.
\section{Conclusion}

We introduce GLEAN, a guideline-grounded verification framework that achieves strong discrimination and calibration by operationalizing domain protocols into trajectory-informed confidence in the correctness of high-stakes agents, such as clinical diagnosis. It incorporates multiple strategies to enhance the informativeness of the verification signal, including multi-source aggregation and active evidence collection, demonstrating practical utility as confirmed by expert validation. GLEAN provides a formal method to include domain knowledge for verification, which is naturally potential to extend with codified standards or unstructured expert experience, such as legal, financial, or safety-critical settings, and provides a practically reliable solution for deploying autonomous agents across diverse high-stakes applications.

\section*{Impact Statement}

This work tackles reliable verification of AI agents in high-stakes applications such as clinical diagnosis by grounding verification in professional guidelines. Our framework could enhance patient safety by flagging guideline-inconsistent reasoning, provide interpretable signals for expert check, and is potential to generalize to other domains with explicit standards. An important emphasis is that verification should complement rather than replace qualified human judgment, as the signals of correctness are probabilistic and the guidelines themselves may contain flaws or be outdated. Broader validation across diverse healthcare settings might be needed to deploy the method in practice. In summary, this work underscores the value of domain-grounded verification with explicit uncertainty quantification, as validated by expert oversight, for responsible high-stakes AI deployment.

\bibliography{example_paper}
\bibliographystyle{icml2025}

\clearpage
\appendix
\onecolumn

\newtcolorbox{cvbox}[1][]{
    enhanced,
    after skip=8mm,
    title=#1,
    breakable = true,
    fonttitle=\sffamily\bfseries,
    coltitle=black,
    colbacktitle=gray!10,   
    titlerule= 0pt,         
    overlay={%
        \ifcase\tcbsegmentstate
        \or%
        \else%
        \fi%
    }
    colback = gray,         
    colframe = black!75     
    }

\section{Properties of Guideline-Grounded Signals}

In this section, we present a more detailed discussion on the desired properties of surrogate evidence in~\cref{sec:analysis}.

\subsection{Error Bound Analysis}
\label{sec:error}
We hereby provide a theoretical justification for using simple linear calibration by presenting an error bound for our verification method. Our analysis shows that when guideline-grounded signals satisfy two key properties introduced in~\cref{sec:analysis}, low-capacity linear calibration achieves good generalization even with limited supervision.

We first provide a formal description of the remark. Let $p^* := P(Z=1|\tau_{1:t})$ denote the true probability that a trajectory 
leading to answer $y$ is correct, and let $\hat{p}_t$ denote our calibrated 
estimate for a prefix from GLEAN. We analyze the expected verification error 
$\mathbb{E}[(\hat{p}_t - p^*)^2]$ under two assumptions about the 
guideline-grounded surrogate evidence $S_t$ at step $t$. Below, we neglect $t$ for $S_t$ and provide an analysis over arbitrary prefix length.

Let $(\tau,S,Z)\in\mathcal{C}$ be the calibration distribution, where $S\in\mathbb{R}$ is a scalar aggregated signal for trajectory $\tau$, and $Z\in\{0,1\}$ indicates correctness.

\begin{assumption}[Approximate Sufficiency]
\label{assump:sufficiency}
The accumulated surrogate evidence $S$ captures most information about 
trajectory correctness:
\begin{equation}
\mathbb{E}_{\tau,S\sim\mathcal{C}}[(p_S(S)- p^\ast(\tau))^2] \leq \epsilon_{\text{suff}}
\end{equation}
where $p_S(S)=P(Z=1|S)$ and $\epsilon_{\text{suff}}$ represents the information loss from reducing the trajectory to the accumulated signal. 
\end{assumption} 

\begin{assumption}[Monotone Near-Linearity]
\label{assump:linearity}
Given the accumulated surrogate evidence $S$, there exist $a>0, c\in\mathbb{R}$ that allow $\forall S$
\begin{equation}
\text{logit}(p_S(S)) = a S + c + \delta(S)
\end{equation}
where $a > 0$ ensures monotonicity and $\sup_{S}|\delta(S)| \leq 
\epsilon_{\text{lin}}$ bounds the deviation from perfect linearity. 
\end{assumption}

Under Assumptions~\ref{assump:sufficiency} and \ref{assump:linearity}, we present a proposition as follows.

\begin{proposition}
\label{prop:error-bound}
Let $\mathcal{F}\subseteq \{f:\mathbb{R}\rightarrow[0,1]\}$ be a class of probabilistic calibrators, which contains the linear-logit model $g^\ast(S)=\sigma(aS+c)$ as introduced in~\cref{assump:linearity}. Given $M$ calibration samples $\{(S^m, Z^{m})\}_{m=1}^M$, define the Brier squared loss risk and its empirical counterpart as:
\begin{align*}
    \mathcal{L}(f)&\triangleq\mathbb{E}_{S,Z\sim\mathcal{C}}[(f(S)-Z)^2],\\
    \hat{\mathcal{L}}_M(f)&\triangleq\frac{1}{M}\sum_{m=1}^M(f(S^m)-Z^m)^2,
\end{align*}
and let $\hat{f}$ be the empirical risk minimizer (ERM) such that $\hat{f}\triangleq\arg\min_{f\in\mathcal{F}}\hat{\mathcal{L}}_M(f)$.

Then, for any $\delta\in(0,1)$, with a probability at least $1-\delta$ over the drawn calibration samples,
\begin{align}
    \mathbb{E}_{\tau,S\sim\mathcal{C}}[(\hat{f}(S)-p^\ast(\tau))^2]\le\quad2\epsilon_{\text{suff}}^2+\frac{\epsilon_{\text{lin}^2}}{8}+O\left(\sqrt{\frac{\text{Pdim}(\mathcal{F})\log M+\log(1/\delta)}{M}}\right),
\end{align}
where $\text{Pdim}(\mathcal{F})$ denotes the pseudo-dimension~\citep{bartlett2019nearly,khavari2021lower}, the real-valued capacity measure generalized from VC dimension~\citep{mohri2018foundations}, that reflects the complexity of models.

\end{proposition}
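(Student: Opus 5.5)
The plan is to split the target error into three pieces and bound each one separately. The pieces are the information loss from compressing $\tau$ to $S$, the misspecification of the linear-logit model, and the estimation error of the ERM.

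\textbf{Step 1: compression error.} Write
\[
\hat f(S)-p^\ast(\tau)=\bigl(\hat f(S)-p_S(S)\bigr)+\bigl(p_S(S)-p^\ast(\tau)\bigr),
\]
and apply $(u+v)^2\le 2u^2+2v^2$. The second term is controlled by \cref{assump:sufficiency}. The factor $2\epsilon_{\text{suff}}^2$ in the statement suggests reading $\epsilon_{\text{suff}}$ as a root-mean-square bound. I will state that convention explicitly, or otherwise carry $2\epsilon_{\text{suff}}$ and note that the constants are absorbed.

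\textbf{Step 2: reduce to excess Brier risk.} The term $\mathbb{E}[(\hat f(S)-p_S(S))^2]$ lives only on $S$. Since $p_S(S)=\mathbb{E}[Z\mid S]$, the standard Brier decomposition gives
\[
\mathcal{L}(f)=\mathbb{E}[(f(S)-p_S(S))^2]+\mathbb{E}[p_S(1-p_S)].
\]
Hence $\mathbb{E}[(\hat f-p_S)^2]=\mathcal{L}(\hat f)-\mathcal{L}(p_S)$, which splits as
\[
\bigl[\mathcal{L}(\hat f)-\mathcal{L}(g^\ast)\bigr]+\mathbb{E}[(g^\ast-p_S)^2].
\]

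\textbf{Step 3: approximation term.} This is where \cref{assump:linearity} enters. The sigmoid is $\tfrac14$-Lipschitz, and
\[
p_S=\sigma(aS+c+\delta(S)),\qquad g^\ast=\sigma(aS+c),
\]
so $|g^\ast-p_S|\le \epsilon_{\text{lin}}/4$. This gives $\mathbb{E}[(g^\ast-p_S)^2]\le \epsilon_{\text{lin}}^2/16$. After the factor $2$ from Step 1, the total is $\epsilon_{\text{lin}}^2/8$, which matches the stated constant. This also confirms the $2u^2+2v^2$ split is the intended route.

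\textbf{Step 4: estimation term, the main obstacle.} I write the excess risk as
\[
\mathcal{L}(\hat f)-\mathcal{L}(g^\ast)=\bigl[\mathcal{L}(\hat f)-\hat{\mathcal{L}}_M(\hat f)\bigr]+\bigl[\hat{\mathcal{L}}_M(\hat f)-\hat{\mathcal{L}}_M(g^\ast)\bigr]+\bigl[\hat{\mathcal{L}}_M(g^\ast)-\mathcal{L}(g^\ast)\bigr].
\]
The middle bracket is $\le 0$ by the ERM property, since $g^\ast\in\mathcal{F}$. The last bracket is an average of i.i.d. variables in $[0,1]$, so Hoeffding gives $O(\sqrt{\log(1/\delta)/M})$. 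For the first bracket I need a uniform deviation bound over the loss class $\{(S,Z)\mapsto (f(S)-Z)^2\}$. The loss is bounded in $[0,1]$ and $2$-Lipschitz in $f$ on $[0,1]$. I will invoke the standard pseudo-dimension uniform convergence result (Pollard / Anthony--Bartlett, as in \citet{mohri2018foundations}). Either bound the covering numbers of $\mathcal{F}$ via $\text{Pdim}(\mathcal{F})$, or bound the pseudo-dimension of the composed loss class by $O(\text{Pdim}(\mathcal{F}))$. Either way the result is
\[
\sup_{f\in\mathcal{F}}|\mathcal{L}(f)-\hat{\mathcal{L}}_M(f)|=O\Bigl(\sqrt{(\text{Pdim}(\mathcal{F})\log M+\log(1/\delta))/M}\Bigr).
\]
The care needed here is in transferring the capacity bound through the squared loss, and in a union bound over the two events with $\delta/2$ each.

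Multiplying by the factor $2$ from Step 1 and absorbing constants into the $O(\cdot)$ yields the claimed inequality.
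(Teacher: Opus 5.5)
Your proposal is correct and follows the paper's proof essentially step for step. Both arguments use the Brier decomposition to reduce to $\mathbb{E}[(\hat f(S)-p_S(S))^2]$, the $\tfrac14$-Lipschitz sigmoid to get $\epsilon_{\text{lin}}^2/16$ (hence $\epsilon_{\text{lin}}^2/8$ after the factor $2$), a pseudo-dimension uniform-convergence bound for the estimation term, and the final $(u+v)^2\le 2u^2+2v^2$ split. The differences are minor: you compare $\hat f$ directly to $g^\ast$ (ERM plus Hoeffding for the fixed comparator), where the paper uses $\mathcal{L}(\hat f)\le\inf_{f\in\mathcal{F}}\mathcal{L}(f)+2\Delta_M$. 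You are also right on two points the paper glosses over: \cref{assump:sufficiency} as stated yields $2\epsilon_{\text{suff}}$ rather than the $2\epsilon_{\text{suff}}^2$ the paper writes, and the transfer of $\text{Pdim}(\mathcal{F})$ through the squared loss is a step the paper leaves implicit.
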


\begin{proof}
For any fixed S and any function $f$, we have
\begin{align*}
    \mathbb{E}[(\hat{f}(S)-Z)^2|S] = (\hat{f}(S)-\mathbb{E}[Z|S])^2 + \text{Var}[Z|S]= (\hat{f}(S)-p_S(S))^2 + \text{Var}[Z|S].
\end{align*}
Taking the expectation over $S$ with the variance being independent of $f$ gives the identity,
\begin{equation}
    \mathcal{L}(f)-\mathcal{L}(p_S)=\mathbb{E}[(f(S)-p_S(S))^2].
\label{eq:identity_brier}
\end{equation}

We define $\Delta_M=\sup_{f\in\mathcal{F}}|\mathcal{L}(f)-\hat{L}_M(f)|$. Then for the ERM $\hat{f}$ learned on the calibration samples, we have
\begin{align*}
    \mathcal{L}(f)\le\hat{\mathcal{L}}_M(f)+\Delta_M,\;&\hat{\mathcal{L}}_M(f)\le\mathcal{L}(f)+\Delta_M,\;\forall f\in\mathcal{F},\\
    \mathcal{L}(\hat{f})&\le\inf_{f\in\mathcal{F}}\mathcal{L}(f)+2\Delta_M.
\end{align*}
Subtracting $\mathcal{L}(p_S)$ and applying~\cref{eq:identity_brier} leads to
\begin{equation}
    \mathbb{E}[(\hat{f}(S)-p_S(S))^2]\le\inf_{f\in\mathcal{F}}\mathbb{E}[(f(S)-p_S(S))^2]+2\Delta_M.
\end{equation}

Notice that $g^\ast\in\mathcal{F}$ and the sigmoid function is $\frac{1}{4}$-Lipschitz. This gives, for any $S$,
\begin{equation*}
    |g^\ast(S)-p_S(S)| \le \frac{1}{4}|\delta(S)|\le\frac{1}{4}\epsilon_{\text{lin}},
\end{equation*}
and further yields
\begin{equation}
    \inf_{f\in\mathcal{F}}\mathbb{E}[(f(S)-p_S(S))^2]\le\mathbb{E}[(g^\ast(S)-p_S(S))^2]\le\left(\frac{\epsilon_\text{lin}}{4}\right)^2.
\label{eq:first}
\end{equation}

By the definition of $\Delta_M$ and the generalization bound for bounded, real-valued function~\citep{khavari2021lower}, for any $\delta\in(0,1)$, with a probability at least $1-\delta$,
\begin{equation}
    \Delta_M \le O\left(\sqrt{\frac{\text{Pdim}(\mathcal{F})\log M+\log(1/\delta)}{M}}\right).
\label{eq:second}
\end{equation}

Therefore, combining~\cref{eq:first,eq:second} along with the sufficiency assumption, we eventually have
\begin{align*}
    \mathbb{E}&[(\hat{f}(S)-p^\ast(\tau))^2]\\
    &\le2\mathbb{E}[(\hat{f}(S)-p_S(S))^2]+2\mathbb{E}[(p_S(S)-p^\ast(\tau))^2]\\
    &\le2\epsilon_{\text{suff}}^2+\frac{\epsilon_{\text{lin}^2}}{8}+O\left(\sqrt{\frac{\text{Pdim}(\mathcal{F})\log M+\log(1/\delta)}{M}}\right).
\end{align*}

\end{proof}

\textbf{Discussion.} 
Intuitively, \cref{assump:sufficiency} states that guideline alignment scores, when accumulated along the trajectory, preserve most information relevant to determining correctness. While individual steps may contain additional details, the accumulated evidence $S_t$ captures the essential information. Empirically, we validate the informativeness of guideline-grounded surrogate evidence by demonstrating their significant discrimination between correct and incorrect trajectories, as shown in~\cref{sec:analysis} as well as~\cref{fig:analysis_beta}. We note that strong discrimination is a necessary but not a sufficient condition for statistical sufficiency. Motivated by the fact that this sufficiency may not hold perfectly, we further introduce active verification strategies to reduce the residual uncertainty not captured by $S_t$.

Meanwhile, \cref{assump:linearity} states that the relationship between accumulated evidence and correctness probability is approximately linear in logit space, which motivates the use of logistic regression for calibration. The monotonicity straightforwardly indicates that better guideline alignment leads to a higher correctness probability. In~\cref{sec:analysis}, we fit a linear model to the logit of probabilities to confirm this property for guideline-grounded evidence. The $R^2$ is 0.943 as reported in~\cref{fig:analysis}, which serves as a practical indicator for $\epsilon_\text{lin}$. Qualitatively, we also visualize the logit of $P(Z=1|S_t)$ against $S_t$, where near-linearity manifests as an approximately straight line. 

When these two properties approximately hold, indicating that the two approximation terms tend to be small, the remaining error is primarily governed by the capacity-controlled generalization error in terms of the model family. Since a one-dimensional or low-dimensional linear calibrator has a substantially smaller capacity than more expressive models~\citep{mohri2018foundations}, it would achieve the same calibration error with fewer calibration samples. This supports our choice of a linear calibrator under the scarce-supervision regime which is typical in high-stakes settings.

\subsection{Properties Validation}
\label{sec:property}

\begin{figure}[h]
    \centering
    \includegraphics[width=.6\linewidth]{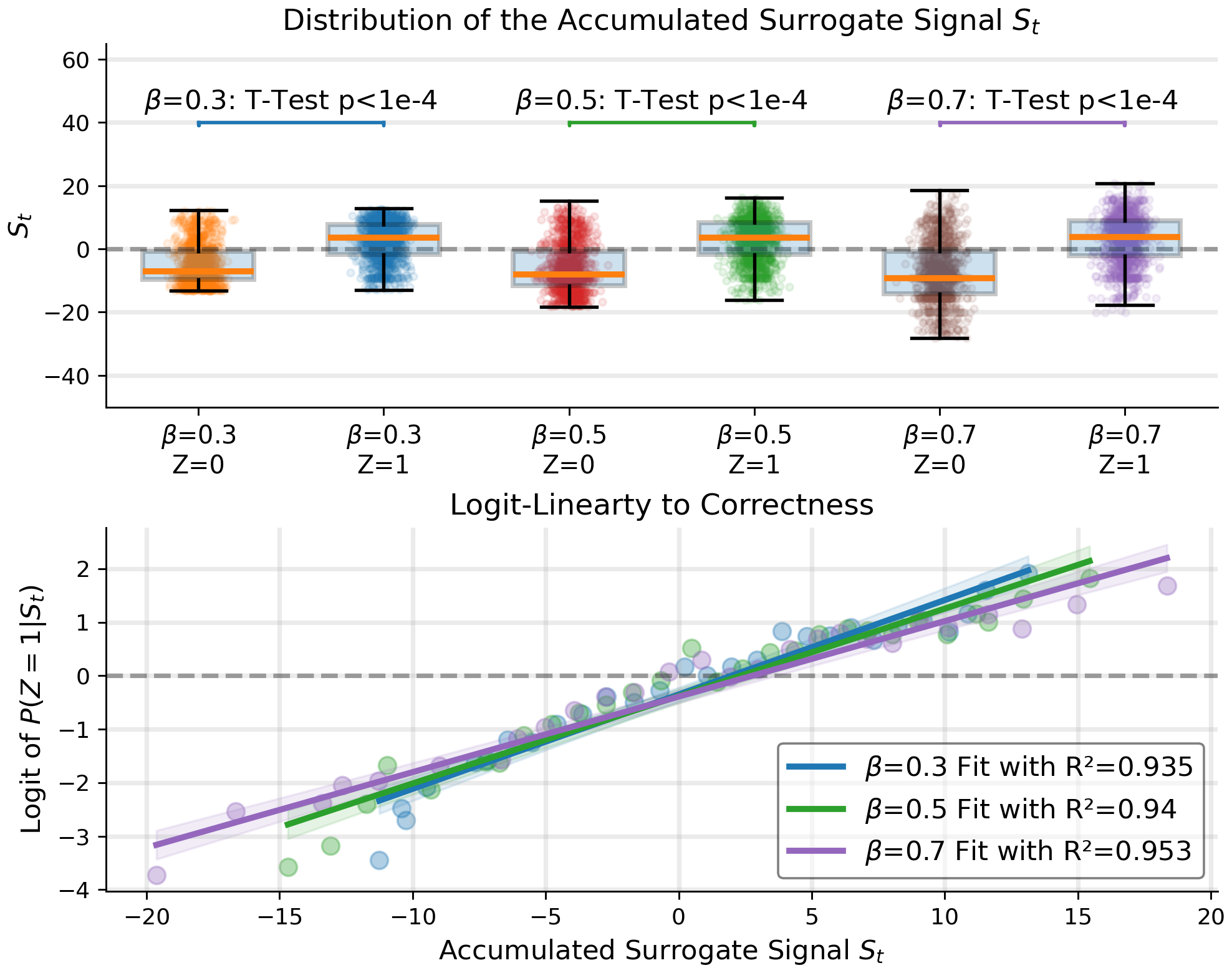}
    \vspace{-1.5ex}
    \caption{Properties of guideline-grounded signals accumulated with $\beta$-discounting.}
    \label{fig:analysis_beta}
\end{figure}

As our accumulated surrogate evidence is practically implemented as a $\beta$-discounted sum over step-level alignment signals, we additionally verify that the resulting $S^{(\beta)}_t=\sum_{i=1}^t\beta^{t-i}\log\frac{s_i}{1-s_i}$ preserves the desired properties across a range of discount factors. Specifically, we construct prefix-level surrogate evidence with $\beta\in\{0.3,0.5,0.7\}$, and examine whether it remains informative for correctness. As shown in the top panel of \cref{fig:analysis_beta}, the distributions of $S^{(\beta)}_t$ for correct and incorrect trajectories are consistently well separated for all $\beta$, with statistically significant mean differences (t-tests, $p<1e-4$). This indicates that discounting does not destroy the discriminative signal of the accumulated evidence, and that $S^{(\beta)}_t$ remains an informative, compact summary of guideline-grounded trajectory quality.

We further assess the monotone near-linearity assumption in logit space for discounted accumulation. For each $\beta$, we estimate $P(Z=1|S^{(\beta)}_t)$ via binning and compute its logit, then fit a one-dimensional linear model. The bottom panel of~\cref{fig:analysis_beta} shows that the logit-transformed correctness probability aligns closely with a linear trend over $S_t^{(\beta)}$, resulting in high $R^2\approx0.94-0.95$ across different discount factors. Notably, the fitted slopes are similar across $\beta$, suggesting that changes in the discount factor do not impact the fundamental relationship to correctness, but only adjust the precision with a few different information. This robustness implies that the proposed properties being approximately satisfied mainly arise from guideline grounding via accumulation rather than the hyperparameter selection, and supports our choice to use a single linear calibrator in practice.

\section{Details of GLEAN}

We first provide the algorithm of GLEAN in~\cref{alg:GLEAN} and then respectively introduce more implementation details and additional experimental results.

\begin{algorithm}[ht]
   \caption{Guideline-Grounded Evidence Accumulation}
   \label{alg:GLEAN}
\begin{algorithmic}
   \STATE {\bfseries Input:} trajectory $\tau_{1:T}=\{o_t,a_t\}_{t=1}^T$, answer $y$, judge $\mathcal{J}$, guideline set $\mathcal{G}$, calibration parameters $\mathbf{w},b\sim p(\cdot|\mathcal{D})$, discount factor $\beta$, threshold $\epsilon_u$, rectification factor $\alpha$
   \STATE {\bfseries Output:} calibrated verification signal for answer $\hat{p}_T$
   \STATE Retrieve relevant guidelines $\hat{\mathcal{G}} \subset \mathcal{G}$ for answer $y$
   \FOR{$t=1$ to $T$}
        \STATE \emph{// Multi-Guideline Aggregation}
        \FOR{each guideline $g\in\hat{\mathcal{G}}$}
            \STATE Query $\mathcal{J}$ whether $(o_t,a_t)$ aligns $g$ to get $s_{t,g}\in[0,1]$
        \ENDFOR
        \STATE Aggregate: $\mathbf{s}_t\leftarrow\Phi(\{s_{t,g}\}_{g\in\hat{\mathcal{G}}})$
   \ENDFOR
   \STATE Accumulate: $\mathbf{S}_T\leftarrow\sum_{t=1}^T\beta^{T-t}\log\frac{\mathbf{s}_t}{1-\mathbf{s}_t}$
   \STATE Calibration: $\hat{p}_T \leftarrow \mathbb{E}_{\mathbf{w},b\sim p(\cdot|\mathcal{D})}[\sigma(\mathbf{w}^{\top}\mathbf{S}_T+b)]$
   \IF{$H(\hat{p}_T)>\epsilon_u$}
    \STATE \emph{// Guideline Expansion}
    \STATE Retrieve more guidelines $\hat{\mathcal{G}}^+\subset \mathcal{G}$ for answer $y$
    \STATE $\hat{\mathcal{G}}\leftarrow\hat{\mathcal{G}}\cup\hat{\mathcal{G}}^+$
    \STATE Query $\mathcal{J}$ for each $g\in\hat{\mathcal{G}}^+$ and get $\{s_{t,g}\}_{g\in\hat{\mathcal{G}}}$
    \STATE \emph{// Differential Checks}
    \STATE Retrieve competitive guidelines $\hat{\mathcal{G}}^-\subset\mathcal{G}$ for $y'\neq y$
    \STATE Query $\mathcal{J}$ for each $g\in\hat{\mathcal{G}}^-$ and get $\{s_{t,g}\}_{g\in\hat{\mathcal{G}}^-}$
    \STATE Correct $\{s^{rec}_{t,g}\}_{g\in\hat{\mathcal{G}}}$ with~\cref{eq:correction}
    \STATE Recompute $\mathbf{S}_T$ and estimate $\hat{p}_T$
   \ENDIF
\end{algorithmic}
\end{algorithm}

\subsection{Implementation Details}
\label{sec:implementation}

\textbf{Baselines.} For model-based ratings, we provide the model with the input and output of the diagnosis from the agent and ask it to decide whether the diagnosis is correct or to provide its confidence in the correctness. Specifically, we ask the model to return YES/NO for $P(\texttt{True})$ and a scalar score from 0 to 1 for LLM-as-a-Judge. For sampling-based models, it is difficult to group the answers directly due to the open-ended nature of clinical diagnosis. We therefore embed the candidate answers with an embedding model and calculate their embedding similarities with the answer to be verified. We consider them to be the same if the similarity is higher than 0.6 and calculate the ratio of supporting candidates for self-consistency. We cluster the candidates using the same criteria to calculate the semantic entropy. As for self-verification, we ask the model to decide whether each observation complies with the typical symptoms of the diagnosis and take the average score across each check. We supplement $P(\texttt{True})$ with the diagnosis process and the retrieved guideline to implement RAG-augmented verification. We follow the practice of Med-PRM~\citep{yun2025med} for methods with reward models, which trains the model to output two special tokens and calculate the likelihood of the positive one as the reward.

\textbf{Guideline Retrieval.} To retrieve guidelines for different predictions, we first prompt an LLM, i.e., GPT-4o, to extract the primary diagnosis from the raw prediction and then run keyword matching against the titles of guidelines in the guideline dataset. If multiple titles have a same number of keywords matched, then we use an embedding model, \texttt{all-mpnet-base-v2}, to rank them. After that, we take top-K guidelines to support the verification.

\textbf{Step-wise Rating.} Specifically, we prompt the agent backbone with the history up to the current step and ask it to decide whether the observation and the action align with the provided guideline in the manner of $P(\texttt{True})$ to save output tokens. We use the following prompt for guideline-grounded step-wise rating and minimal changes are made for ablations. We get the top-10 logits from the model for the first token and calculate the token probability following~\cref{eq:ptrue}.

\begin{cvbox}[\vspace{8pt}Prompt for Guideline-grounded Step-wise Rating]
You are a board-certified clinician and a strict guideline compliance evaluator.\newline
\newline
You will be given:
(1) The diagnosis history so far (prior steps),\newline
(2) The current step, including the new observation(s) and the decision/action taken,\newline
(3) A reference clinical guideline relevant to the proposed diagnosis or management.\newline
\newline
Your job is NOT to judge the entire final diagnosis. Your job is to judge whether the CURRENT STEP is clinically appropriate and consistent with the guideline, given the available information up to this step.\newline
\newline
Be strict and conservative:\newline
- Answer YES only if the current step is clearly supported by the patient information so far AND is consistent with the guideline.\newline
- Answer NO if the step is unsupported, premature, contradicts key facts, violates guideline recommendations, skips required checks, or is not justified as the next best step.\newline
\newline
Reply with exactly one token: YES or NO.\newline
\newline
Diagnosis so far (prior steps):\newline
\{history\}\newline
\newline
Current step:\newline
Observation(s):\newline
\{observation\}\newline

Rationale \& Action:\newline
\{action\}\newline

Reference guideline:\newline
\{guideline\}\newline
\newline
Task: Is the CURRENT STEP appropriate and guideline-consistent given the patient information so far?\newline
Reply with YES or NO only.
\vspace{8pt} 
\end{cvbox}

\textbf{Active Verification.} For guideline expansion, we take the highest-ranking guideline that has not been used previously for additional judgment. As for differential checks, we first extract competitive or differential conditions for the predicted diagnosis. Instead of asking an LLM, we seek similar ones in the generated diagnosis for the specific disease and randomly pick two guidelines from them. Considering that the diagnosed conditions are given based on patients with the same disease, the candidates are highly likely to be competing alternatives.

\begin{table}[htbp]
\centering
\begin{minipage}{0.48\textwidth}
    \centering
    \caption{\textbf{Ablations on aggregation strategies.}}
    
    \begin{tabular}{ccccc}
\toprule[1.5pt]
   $\Phi$                & AUROC  & Risk   & ECE    & Brier  \\\midrule
\multicolumn{5}{c}{Agent Backbone: \textit{Qwen2.5-7B-Instruct}} \\\midrule
avg                & 0.8870 & 0.1767 & 0.0643 & 0.1383 \\
min                & 0.9114 & 0.1656 & 0.0362 & 0.1202 \\
avg, min           & 0.9138 & 0.1678 & 0.0469 & 0.1181 \\
avg, min, max, std & 0.9130 & 0.1644 & 0.0427 & 0.1193 \\\midrule
\multicolumn{5}{c}{Agent Backbone: \textit{Qwen3-30B-A3B-Instruct}} \\\midrule
avg                & 0.9025 & 0.1767 & 0.1088 & 0.1382 \\
min                & 0.9385 & 0.1278 & 0.0595 & 0.0976 \\
avg, min           & 0.9389 & 0.1300 & 0.0700 & 0.0947 \\
avg, min, max, std & 0.9387 & 0.1256 & 0.0598 & 0.0983 \\\bottomrule[1.5pt]
\end{tabular}
\label{tab:aggregation}
\end{minipage}
\hfill  
\begin{minipage}{0.45\textwidth}
    \centering
    \caption{\textbf{Ablations on the selection of $\beta$.}}
    
    \begin{tabular}{ccccc}
\toprule[1.5pt]
$\beta$                   & AUROC  & Risk   & ECE    & Brier  \\\midrule
\multicolumn{5}{c}{Agent Backbone: \textit{Qwen2.5-7B-Instruct}} \\\midrule
0.1 & 0.8988 & 0.1844 & 0.0635 & 0.1334 \\
0.3 & 0.9083 & 0.1767 & 0.0538 & 0.1244 \\
0.5 & 0.9137 & 0.1678 & 0.0418 & 0.1178 \\
0.7 & 0.9145 & 0.1678 & 0.0352 & 0.1154 \\
0.9 & 0.9080 & 0.1678 & 0.0314 & 0.1189 \\\midrule
\multicolumn{5}{c}{Agent Backbone: \textit{Qwen3-30B-A3B-Instruct}} \\\midrule
0.1 & 0.9357 & 0.1389 & 0.0800 & 0.1076 \\
0.3 & 0.9391 & 0.1267 & 0.0602 & 0.0988 \\
0.5 & 0.9389 & 0.1300 & 0.0700 & 0.0947 \\
0.7 & 0.9328 & 0.1322 & 0.0420 & 0.0984 \\
0.9 & 0.9245 & 0.1378 & 0.0330 & 0.1060 \\\bottomrule[1.5pt]
    \end{tabular}
    \label{tab:beta}
\end{minipage}
\end{table}

\newpage

\subsection{Additional Results}
\label{sec:addtional}

In this section, we present additional experimental results to further justify the design of GLEAN.

\textbf{Aggregation Strategy.} In the main text, we adopt $\Phi(\{s_{t,g}\})=[\min,\text{avg}]_{g\in\hat{\mathcal{G}}} s_{t,g}$, resulting in a 2-dimensional feature for each step. \cref{tab:aggregation} presents an ablation on different aggregation functions for combining multi-guideline ratings. We compare four strategies, including averaging (avg), taking the minimum (min), combining both (avg, min), and using comprehensive statistics (avg, min, max, std). Across both agent backbones, we observe that conservative aggregation via the minimum rating performs strongly, achieving AUROC of 0.9114 and 0.9385 for the two backbones respectively, with notably low calibration error. This suggests that the most critical guideline, the one with the lowest alignment, effectively captures deviations from acceptable reasoning. Combining average and minimum provides a slight improvement, balancing conservative assessment with overall alignment trends (AUROC 0.9138 and 0.9389). Interestingly, adding more statistics, i.e., maximum and standard deviation, does not consistently improve performance and sometimes increases calibration error, likely due to noises in the low-data regime. Based on these results, we adopt (avg, min) as our default aggregation strategy.

\textbf{Hyperparameter Selection.} We ablate the discount factor $\beta$ (Table~\ref{tab:beta}) and the rectification factor $\alpha$ (Table~\ref{tab:alpha}). For $\beta$, with $\beta=0.1$ over-emphasizing recent evidence and 
$\beta=0.9$ giving excessive weight to noisy early steps, these extreme values underperform. Moderate values ($\beta \in [0.3, 0.7]$) achieve 
strong performance. We select $\beta=0.5$, which balances down-weighting noisy early 
signals while incorporating trajectory history, achieving AUROC of 
0.9137 and 0.9389 with good calibration (ECE 0.0418 and 0.0700).
For $\alpha$ in differential checks, we observe a discrimination-calibration 
trade-off. Although $\alpha=0.5$ achieves the highest AUROC, it significantly degrades calibration, especially for Qwen3-30B. Since well-calibrated confidence is critical for trustworthy high-stakes verification, we select $\alpha=0.2$, which maintains strong discrimination with reliable probability estimates. This corresponds to a conservative rectification that incorporates counter-evidence without over-penalization.

\begin{table}[htbp]
\centering
\begin{minipage}{0.4\textwidth}
    \centering
\caption{\textbf{Ablations on the selection of $\alpha$.}}

\renewcommand{\arraystretch}{1.2}

\begin{tabular}{ccccc}
\toprule[1.5pt]
$\alpha$                   & AUROC  & Risk   & ECE    & Brier  \\\midrule
\multicolumn{5}{c}{Agent Backbone: \textit{Qwen2.5-7B-Instruct}} \\\midrule
0.1 & 0.9411 & 0.1344 & 0.0639 & 0.0989 \\
0.2 & 0.9470 & 0.1300 & 0.0700 & 0.0947 \\
0.5 & 0.9480 & 0.1256 & 0.0777 & 0.1009 \\\midrule
\multicolumn{5}{c}{Agent Backbone: \textit{Qwen3-30B-A3B-Instruct}} \\\midrule
0.1 & 0.9518 & 0.1133 & 0.0607 & 0.0864 \\
0.2 & 0.9557 & 0.1078 & 0.0653 & 0.0849 \\
0.5 & 0.9601 & 0.1033 & 0.1113 & 0.0984 \\\bottomrule[1.5pt]
    \end{tabular}

\label{tab:alpha}
\end{minipage}
\hfill  
\begin{minipage}{0.58\textwidth}
    \centering
\caption{\textbf{Analysis of computational costs} with $N=16$ for generation budget, $T$ for execution steps, $K$ for guidelines to use.}

\small
\renewcommand{\arraystretch}{1.2}
\setlength{\tabcolsep}{1.5pt}
\begin{tabular}{lccc|cc}
\toprule[1.5pt]
Method                  & Complex.  & \#LLM Calls   & \#Tokens(k)    & AUROC & Brier  \\\midrule
\multicolumn{6}{c}{Agent Backbone: \textit{Qwen2.5-7B-Instruct}} \\\midrule
Self-Consistency & $O(T\cdot N)$ & 78.3 & 135.2 & 0.8368 & 0.1886 \\
GLEAN ($K=1$) & $O(T)$ & 4.7 & 22.1 & 0.8574 & 0.1554 \\
GLEAN ($K=3$) & $O(T\cdot K)$ & 14.0  &  78.0 & 0.9137 & 0.1178 \\
+Active ($\epsilon_u=0.5$) & $O(T\cdot K)$ &  20.3  & 132.7 & 0.9345 & 0.0992 \\\bottomrule[1.5pt]
    \end{tabular}

\label{tab:compute}
\end{minipage}
\end{table}

\textbf{Computational Costs.} We analyze the computational costs of GLEAN and compare it with the best-performing baseline, Self-Consistency, in~\cref{tab:compute}. Let $T$ denote trajectory length, $K$ the number of guidelines, and $N$ candidate samples. Self-consistency has a complexity of $O(N \cdot T)$, requiring $N$ full-trajectory generations, while GLEAN has a complexity of $O(T \cdot K )$, verifying a single trajectory with $K$ guideline-grounded ratings per step, where the rating extraction via token probabilities is much cheaper than autoregressive generation. Empirically, self-consistency uses 78.3 LLM calls and 135.2k tokens to achieve AUROC of 0.8368 with 16 generations for each case, while GLEAN ($K=1$) delivers better performance with only 4.7 calls and 22.1k tokens. At $K=3$, GLEAN achieves AUROC of 0.9137 with 78.0k tokens, substantially outperforming self-consistency with a lower cost, and active verification adds moderate overhead but further improves discrimination and calibration. These results demonstrate that guideline grounding provides reasonable performance-cost trade-offs, especially compared to test-time scaling methods in generation.

\textbf{Cross-Agent Verification.} Table~\ref{tab:cross_verification_full} examines cross-backbone verification. We take the verifiers individually trained with different agent backbones and evaluate their verification performance on trajectories by the other agent. From the results, we can see that cross-agent verification remains consistently competitive with reasonable discrimination. Qwen3-30B, as a stronger model, yields better verification signal than Qwen2.5-7B on both agents. Yet, the calibration for Qwen3-30B when verifying the other agent degrades. This indicates that while 
guideline-grounded signals generalize across agents, the calibration for evidence can be partially agent-specific, since the signals may contain specific patterns for the agent.

\begin{table}[h]
\centering

\renewcommand{\arraystretch}{1.2}
\caption{\textbf{Cross verification across two agent backbones.}}
\resizebox{.8\linewidth}{!}{

\begin{tabular}{l cccccccc}
\toprule[1.5pt]
\multirow{3}{*}{\textbf{Verifier}} & \multicolumn{8}{c}{\textbf{Agent}} \\\cmidrule(lr){2-9}
& \multicolumn{4}{c}{\textbf{Qwen2.5-7B}} & \multicolumn{4}{c}{\textbf{Qwen3-30B}} \\
\cmidrule(lr){2-5}\cmidrule(lr){6-9}
        & AUROC  & Risk\texttt{@}0.5 & ECE    & Brier  & AUROC  & Risk\texttt{@}0.5 & ECE    & Brier  \\\midrule
\textbf{Qwen2.5-7B} & 0.9137 & 0.1678                             & 0.0418 & 0.1178 & 0.9056 & 0.1667                             & 0.0310 & 0.1209 \\
\textbf{Qwen3-30B}   & 0.9200 & 0.1633                             & 0.1165 & 0.1368 & 0.9389 & 0.1267                             & 0.0550 & 0.0975
 \\
\bottomrule[1.5pt]
\end{tabular}}

\label{tab:cross_verification_full}
\end{table}

\textbf{Detailed results for Best-of-N.} We also include the detailed results for performance with Best-of-N in~\cref{tab:bon_details}, which correspond to~\cref{fig:bon}.

\begin{table}[h]
\centering
\caption{\textbf{Accuracy with Best-of-N selection.}}

\renewcommand{\arraystretch}{1.2}

\resizebox{.65\linewidth}{!}{
\begin{tabular}{lcccccc}
\toprule[1.5pt]
& \multicolumn{3}{c}{\textbf{Qwen2.5-7B} (Pass\texttt{@}1: $52.16\%$)} &  \multicolumn{3}{c}{\textbf{Qwen3-30B} (Pass\texttt{@}1: $58.94\%$)} \\\cmidrule(lr){2-4}\cmidrule(lr){5-7}
Method            & BoN-4  & BoN-8  & BoN-16 & BoN-4  & BoN-8  & BoN-16 \\\midrule
P(True)           & 0.5386 & 0.5402 & 0.5380 & 0.6322 & 0.6245 & 0.6133 \\
LLM-as-a-Judge    & 0.5818 & 0.6062 & 0.6200 & 0.6424 & 0.6396 & 0.6467 \\
Self-Consistency  & 0.6209 & 0.6617 & 0.6940 & 0.6525 & 0.6865 & 0.7000 \\
RAG               & 0.6891 & 0.7140 & 0.7240 & 0.6901 & 0.6933 & 0.6900 \\
Self-Verification & 0.6366 & 0.6474 & 0.6440 & 0.6743 & 0.6617 & 0.6200 \\
MedPRM            & 0.5723 & 0.5777 & 0.5840 & 0.6468 & 0.6582 & 0.6600 \\
ORM               & 0.5965 & 0.6159 & 0.6440 & 0.6801 & 0.6999 & 0.6966 \\\midrule
GLEAN (N=1)       & 0.6996 & 0.7301 & 0.7360 & 0.7198 & 0.7237 & 0.7167 \\
GLEAN (N=3)       & 0.7052 & 0.7429 & 0.7660 & 0.7383 & 0.7671 & 0.7833 \\\bottomrule[1.5pt]
\end{tabular}}
\label{tab:bon_details}
\end{table}

\subsection{Expert Study Design}
\label{sec:expert}

To validate the clinical utility and interpretability of the GLEAN verification framework, we developed a custom evaluation interface and conducted an expert study with three certified clinicians (with experience in clinical decision making and who were familiar with the conditions studied). Let us now provide further details on the study design.

\subsubsection{Trajectory Selection}

We selected \textbf{50 diagnostic trajectories} from our evaluation set, ensuring diversity across:

\begin{itemize}
    \item \textbf{Diseases}: Pancreatitis, cholecystitis, and diverticulitis
    \item \textbf{Agent correctness}: Balanced mix of correct and incorrect final diagnoses
    \item \textbf{Trajectory length}: Varying numbers of reasoning steps (3--8 steps)
    \item \textbf{GLEAN confidence patterns}: Mix of high-confidence, low-confidence, and trajectories with large confidence shifts
\end{itemize}

This stratified sampling ensured coverage of different verification scenarios that GLEAN might encounter in practice.

\subsubsection{Displayed information per trace.}

Before beginning evaluations, clinicians received written instructions: (1) The purpose of the study, (2) How GLEAN's verification metrics are computed (at a high level), (3)  Interpretation of confidence scores and uncertainty, (4) How to use the evaluation interface.

Then, during evaluation for each trace, we have a lightweight HTML web interface that shows the full step-by-step agent trajectory (see Figure \ref{fig:interface}). The interface displayed the trajectory sequentially. For each step, the reviewer viewed the Agent's Observation (clinical data), Thought (reasoning trace), and Action (tool use), alongside the calculated GLEAN metrics per step, namely Confidence of correctness and Uncertainty (entropy).

\begin{figure}
    \centering
    \includegraphics[width=0.75\linewidth]{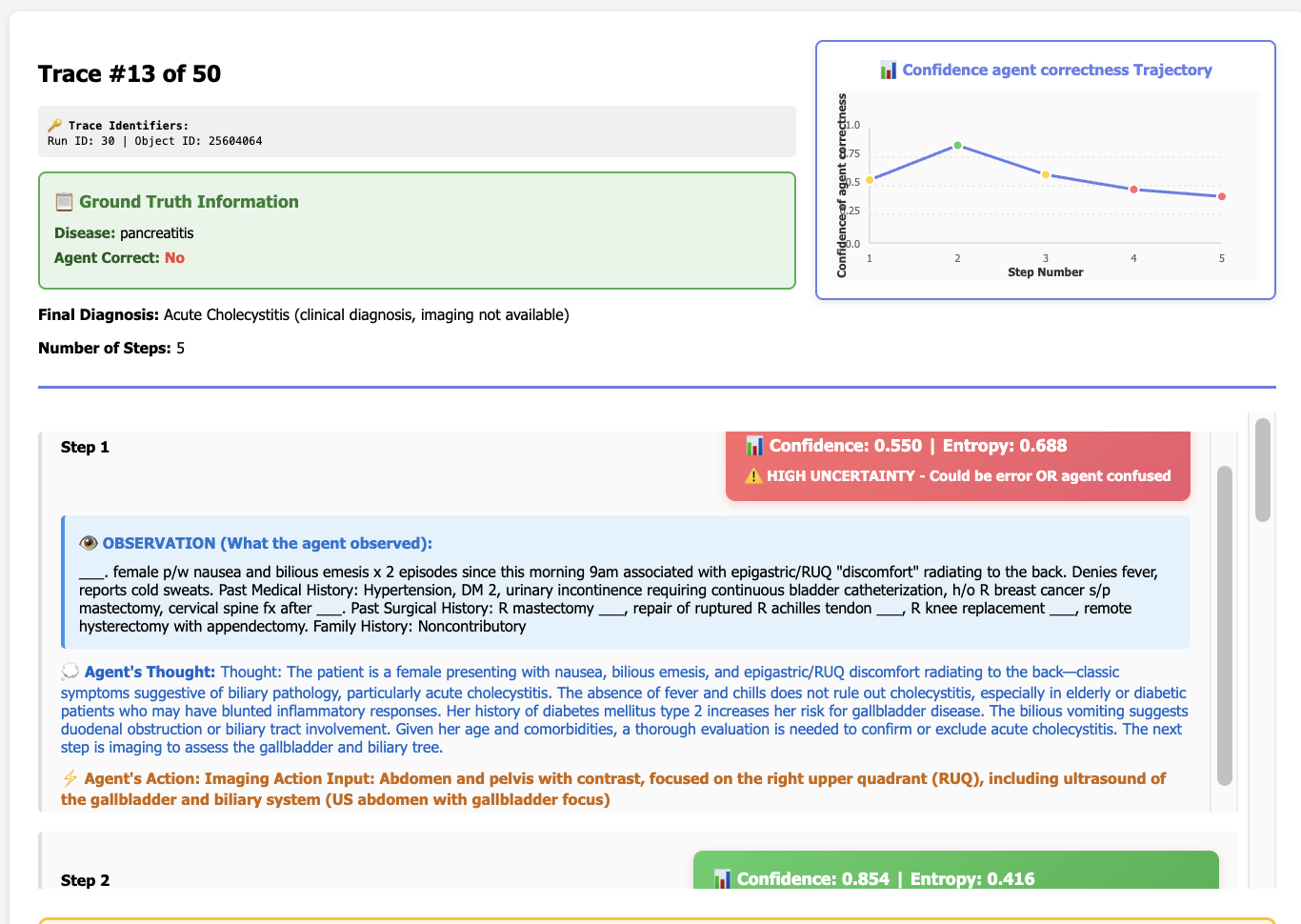}
    \caption{\bf{Annotation interface} shown to clinicians showing the step by step agent trajectory (scroll bar) and GLEAN metrics.}
    \label{fig:interface}
\end{figure}

\subsubsection{Questionnaire and Scoring Rubric}
For each of the 50 traces, clinicians answered a structured questionnaire. The specific questions and exact score ranges used in the study were as follows:

\textbf{Localization}
\begin{enumerate}
    \item \textbf{First Error Step:} \textit{At which step did the agent first make a diagnostic error?} (Input: Step Number or "None")
    \item \textbf{Breakthrough Step:} \textit{At which step did the agent make the KEY correct decision that led to the right diagnosis?} (Input: Step Number or "N/A")
\end{enumerate}

\textbf{Qualitative Assessment (Likert Scales)}
Participants rated the method using the following exact 5-point scales:

$\blacktriangleright$ \textbf{\emph{Interpretability}}:
\textit{How easy was it to use the metrics to understand the agent's reliability?}
\textbf{1:} Very confusing; \textbf{2:} Somewhat confusing; \textbf{3:} Moderately clear; \textbf{4:} Very clear; \textbf{5:} Extremely clear

$\blacktriangleright$ \textbf{\emph{Clinical Utility}}:
\textit{How useful was this verification method for reviewing this trace?}
\textbf{1:} Not useful; \textbf{2:} Slightly useful; \textbf{3:} Moderately useful; \textbf{4:} Very useful; \textbf{5:} Extremely useful

\subsubsection{Metrics}

These results across traces are then collated across clinicians, and the metrics are computed as per the main paper.

\end{document}